\theoremstyle{thmstyleone}%
\newtheorem{theorem}{Theorem}
\newtheorem{proposition}[theorem]{Proposition}%
\theoremstyle{thmstyletwo}%
\theoremstyle{thmstylethree}%
\newcommand{\D}[1]{\mathrm{d}#1}
\DeclareMathOperator*{\argmin}{arg\,min}
\begin{document}

\title[Superpixel-Based Image Segmentation]{Superpixel-Based Image Segmentation Using Squared 2-Wasserstein Distances}


\author[1,2]{\fnm{Jisui} \sur{Huang}}

\author*[1]{\fnm{Andreas} \sur{Alpers}}

\author*[4]{\fnm{Ke} \sur{Chen}}

\author[5]{\fnm{Na} \sur{Lei}}

\affil[1]{\orgdiv{Department of Mathematical Sciences, Centre for Mathematical Imaging Techniques (CMIT)}, \orgname{University of Liverpool}, \orgaddress{\street{Peach Street}, \city{Liverpool}, \postcode{L697ZL}, \state{Merseyside}, \country{United Kingdom}}}

\affil[2]{\orgdiv{School of Mathematical Sciences}, \orgname{Capital Normal University}, \orgaddress{\street{West Third Ring Road North, Haidian District}, \city{Beijing}, \postcode{100084}, \country{China}}}

\affil[4]{\orgdiv{Department of Mathematics and Statistics}, \orgname{University of Strathclyde}, \orgaddress{\street{Richmond St}, \city{Glasgow}, \postcode{G11XH}, \state{Scotland}, \country{United Kingdom}}}

\affil[5]{\orgdiv{DUT-RU International School of Information Sciences \& Engineering}, \orgname{Dalian University of Technology}, \orgaddress{\street{Linggong Road}, \city{Dalian}, \postcode{116024}, \state{Liaoning}, \country{China}}}


\abstract{We present an efficient method for image segmentation in the presence of strong inhomogeneities. The approach can be interpreted as a two-level clustering procedure: pixels are first grouped into superpixels via a linear least-squares assignment problem, which can be viewed as a special case of a discrete optimal transport (OT)  problem, and these superpixels are subsequently greedily merged into object-level segments using the squared 2-Wasserstein distance between their empirical distributions. In contrast to conventional superpixel merging strategies based on mean-color distances, our framework employs a distributional OT distance, yielding a mathematically unified formulation across both clustering levels. Numerical experiments demonstrate that this perspective leads to improved segmentation accuracy on challenging images while retaining high computational efficiency.}

\keywords{discrete optimal transport, clustering, Voronoi diagrams, anisotropic power diagrams, selective segmentation, unsupervised segmentation}



\maketitle

\section{Introduction}
Image segmentation is a fundamental task in image processing and computer vision, which aims at partitioning an image into parts that share common semantic features.  Variational models \cite{wang2021review,yu2020survey,biswas2022state,KANG2019533} are efficient framework for image segmentation as long as the image is approximately piecewise smooth. They represent the object boundary as an implicit contour of the level set function, then assign each level set an energy,  and finally minimize a functional to obtain the segmentation.  Variational models typically comprise a region term and an edge term. 

The region term typically measures the fidelity of the segmentation to the observed image~\cite{mumford1989optimal}. In the Chan–Vese model~\cite{chan2001active}, each region is approximated by its mean intensity, yielding an efficient piecewise‑constant formulation. However, this global variance assumption limits its ability to handle intensity inhomogeneities. Consequently, numerous extensions have been proposed to incorporate spatially varying image models~\cite{wang2010efficient,li2007implicit,li2017variational,lenglet2005riemannian}. Examples include local averaging schemes~\cite{wang2010efficient}, bias‑field estimation~\cite{li2011level}, Gaussian statistical models~\cite{lenglet2005riemannian}, and combined local–global priors~\cite{min2021inhomogeneous}.

The edge term typically enforces contour smoothness by penalizing geometric fluctuations. In the geodesic active contour model~\cite{caselles1997geodesic}, this is achieved by minimizing a gradient‑weighted length functional, which attracts the contour toward image boundaries. Since pure gradients are sensitive to noise, many extensions broaden the notion of edge information. Examples include multi‑local statistical gradients~\cite{liu2019weighted}, adaptive perturbation mechanisms~\cite{yu2019novel}, and curvature‑based regularization. Euler’s elastica has been shown to complete missing boundaries~\cite{zhu2013image}, while Ricci‑curvature–based formulations~\cite{lei2024ricci,huang2025ricci} replace the gradient norm with intrinsic geometric descriptors for improved robustness.

However, current level set models still face two challenges. The first problem is intensity inhomogeneity. An early attempt extended the Chan-Vese model \cite{chan2001active} to the multi-phase version \cite{vese2002multiphase}, which, however, might divide both the foreground and background into several different parts. Recently, the local region descriptor \cite{wang2010efficient,wang2009active,zhang2010active} has been exploited to displace the traditional global mean. For example, the local binary fitting term \cite{li2007implicit} incorporates the weighted distance between the original image and the local averaged image. However, local information cannot distinguish fine features from noise. The coupled denoising term used in~\cite{ali2018image} could also not eradicate this inherited drawback. Another approach is to decompose the image into fluctuation and homogeneity fields and segment the homogeneity part only. Li et al.  \cite{li2011level} optimized an additional bias field to correct the biased image mean. Zhang et al.~\cite{zhang2015level} simultaneously used a bias field, mean and variance to describe a region. However, the bias field is typically evaluated using local information, such as the convolution, which may not be robust to noise. 

A second major limitation is the high computational cost of level‑set–based models. Region terms typically require updating region statistics at every iteration, ranging from simple mean estimates in Chan–Vese~\cite{chan2001active} to convolution‑based local descriptors~\cite{wang2010efficient,li2007implicit,li2017variational,lenglet2005riemannian}. Edge‑based formulations often lead to second‑order PDEs~\cite{liu2017improved}, which are commonly solved using iterative methods such as gradient‑descent or semi‑implicit schemes~\cite{roberts2019convex}. Curvature regularisation introduces even higher‑order Euler–Lagrange equations, as in Elastica‑based models~\cite{zhu2013image}. Moreover, practical implementations require periodic reinitialization of the level‑set function to maintain a signed‑distance representation~\cite{sussman1994level,li2010distance}, adding further computational overhead.

To overcome these problems, we introduce an efficient superpixel‑based segmentation method that remains robust under strong global inhomogeneity, in both unsupervised and marker‑guided settings. Pixels are first grouped into homogeneous superpixels (see~\cite{fiedler2020power}), providing boundary‑preserving regions on which all computations are performed. Similarity between superpixels is measured using the squared 
2-Wasserstein distance from OT~\cite{panaretos2019statistical}, which seems inherently robust to global intensity variation and weak boundaries. The segmentation is then obtained by greedily minimising a regularised pairwise OT energy over the region‑adjacency graph, explicitly tracking the evolving mixtures of superpixels. Operating entirely at the superpixel level yields substantial computational savings compared with pixel‑based variational models.

Most superpixel-based level-set methods incorporate superpixel-derived features into the level-set function to guide segmentation, but still rely on a pixel-level representation~\cite{hao2016superpixel, zhou2017superpixel, 6466882, zhou2015learning, li2019fast, xia2019ivus, liu2022superpixel, liu2012superpixel}. Other approaches use superpixels primarily to accelerate subsequent processing~\cite{wang2017optimal, doi:10.1080/01431161.2017.1354266, xiang2020fast}, yet their heterogeneity measures—typically differences of means or variances—are too coarse for complex intensity inhomogeneity. In contrast, the squared $2$-Wasserstein  distance provides robustness to global intensity disturbances. Although the method in~\cite{li2013variational} also employs a Wasserstein metric, it requires reference distributions derived from ground-truth data, limiting its applicability. Our approach is fully unsupervised and does not rely on ground-truth distributions.
  
Our contribution is summarised as follows:
\begin{itemize}
\item A two‑level OT formulation of segmentation, unifying superpixel construction and region merging within a single OT  perspective.
\item A distributional region similarity measure based on the squared 2-Wasserstein distance, replacing mean‑based and variance‑based descriptors.
\item A superpixel‑level optimization strategy that avoids pixel‑grid PDEs by evolving mixtures of superpixels on the region‑adjacency graph.
\item A fully unsupervised OT‑driven method that does not require reference distributions and is robust to strong global inhomogeneity.
\end{itemize}
Related OT-based region-merging ideas, without superpixels or Wasserstein distances, appeared earlier in a conference paper by the present authors~\cite{HuangChenAlpersLei2025}.

The paper is organized as follows. Section~\ref{sec:Preliminaries} introduces the notation and mathematical background. Section~\ref{sec:Relevant_Models} reviews related segmentation models. Section~\ref{sec:segmodels} presents our superpixel model, and Section~\ref{sect:SPAlgs} describes the corresponding optimization algorithm. Sections~\ref{sec:experiments}–\ref{sect:results} report numerical comparisons on synthetic and real images. Conclusions appear in Section~\ref{sec:conclusion}.


\section{Notation and  Mathematical Background}\label{sec:Preliminaries}

We denote by \( \mathbb{R}^d \) the \( d \)-dimensional Euclidean space with the standard Euclidean norm \( \| \cdot \| \), and by \( \mathbb{R}_{\geq 0} \) the set of nonnegative real numbers. We model an image as a function
$I:\Omega \to \mathbb{R}^c,$  with 
$\Omega\subseteq \mathbb{R}^d$ denoting the spatial domain. Gray-level images correspond to the case 
$c=1,$ whereas color images typically have $c=3.$ For $x\in \Omega,$ we write
$I(x)=(I_1(x),\dots,I_c(x))^T$ for its channel components. We will refer to elements $x \in \Omega$ as pixel locations, even when the
ambient dimension $d$ is not restricted to $2$; when no confusion can arise, we will simply write pixels.  This is a continuous model. A discretized image is a representation of an image~$I$ as a discrete
$\mathbb{R}^c$-valued measure supported on a finite set of pixel locations. Formally, it is given by
\[
I = \sum_{y \in \Omega'} I(y)\,\delta(\cdot - y),
\]
where $\Omega' \subseteq \Omega$ is a finite set of pixels, 
$I(y) \in \mathbb{R}^c$ denotes the color at pixel~$y,$ and $\delta(\cdot -y)$ denotes the Dirac measure concentracted at~$y.$ In this formulation,~$I$ is a discrete measure concentrated at the points
of $\Omega'.$  We denote by $\mathbbm{1}_n \in \mathbb{R}^n$ the vector with all entries equal to one, and by $\chi_V$ the indicator function of a set $V,$ defined as $\chi_V(x) = 1$ if $x \in V,$ and  $\chi_V(x) = 0$ otherwise. 

\subsection{Histograms} \label{sect:hist} Let  
$I:\Omega \to \mathbb{R}^c $ be an image and $\widehat{\Omega}\subseteq \Omega$ a region. 
Let  $\{c_1, \dots, c_k\} \subseteq \mathbb{R}^c$ be a set of predefined bin centers, and define the associated Voronoi cells (bins) by
\[
V_i = \left\{ z \in \mathbb{R}^c \,\middle|\, \|z - c_i\| \leq \|z - c_j\| \text{ for all } j \neq i \right\}, \quad i = 1, \dots, k.
\] The (normalized) histogram $v_{\widehat{\Omega}}=(\nu_1,\dots,\nu_k)^T$ of a region $\widehat{\Omega} \subseteq \Omega,$
with respect to the image~$I$ and the bin centers
$\{c_1,\dots,c_k\}$, is then defined by
\[
\nu_i = \frac{\int_{\widehat{\Omega}} \chi_{V_i}(I(x)) \, dx}{\int_{\widehat{\Omega}} dx},
\]
where $\chi_{V_i} : \mathbb{R}^c \to \{0,1\}$ denotes the indicator function of the Voronoi cell $V_i.$ The histogram~$v_{\widehat{\Omega}}$ can be viewed as the discrete measure
$\sum_{i=1}^k \nu_i\,\delta(\cdot - c_i)$ supported on the bin centers.

\subsection{Optimal Transport}\label{sect:OT}

Let $v\in\mathbb{R}_{\geq0}^a,$ $w\in\mathbb{R}_{\geq0}^b$ satisfy the balance equation
$v^T\mathbbm{1}_a = w^T\mathbbm{1}_b.$
The \emph{transportation polytope associated with $v$ and $w$} is defined by
\[
\mathcal{P}(v,w)
:= \left\{
P = (p_{ij})_{i,j} \in \mathbb{R}_{\ge 0}^{a \times b}
\;\middle|\;
P \mathbbm{1}_b = v,\;
P^\top \mathbbm{1}_a = w
\right\}.
\]
Transportation polytopes are well-studied objects in polyhedral geometry; we refer to~\cite{de2013combinatorics} for a survey of their combinatorial and geometric properties.

Transportation polytopes appear in a variety of computational and optimization settings. 
From the OT perspective, transportation polytopes can be used to model mass-preserving couplings between measures. For further background on OT, we refer to the monographs~\cite{villani2009optimal, peyre2019computational, friesecke2024optimal}.

In OT, the \emph{(balanced) Kantorovich problem} consists in solving  \[
\min_{P \in \mathcal{P}(v,w)}
\sum_{i=1}^a \sum_{j=1}^b d_{ij}\, p_{ij},
\]
where $(d_{ij})_{i,j}$ is a prescribed cost matrix. This is a linear program, whose solution represents an optimal transport plan between supplies~$v$ and demands~$w$, where $p_{ij}^\ast$ specifies the amount transported from index~$i$ to index~$j$. 

Two special cases are worth mentioning. First, if $v=\mathbbm{1}_a$, the feasible
set $\mathcal{P}(\mathbbm{1}_a,w)$ corresponds to a \emph{(fractional) assignment
problem} in which each supply node is assigned exactly once, while demand nodes
admit multiple assignments according to their capacities $w_j$.
Second, if $v^T\mathbbm{1}_a = w^T\mathbbm{1}_b = 1$, the vectors $v$ and $w$ can be
interpreted as discrete probability measures (histograms) supported on bin
centers $\{c_i\}_{i=1}^a \subseteq \mathbb{R}^c$ and
$\{c'_j\}_{j=1}^b \subseteq \mathbb{R}^c$, respectively. 
For the quadratic cost
\[
d_{ij} := \|c_i - c'_j\|^2,
\]
we define
\begin{equation}\label{eq:dis_in_H_define}
d(v,w)
:= \left(
\min_{P \in \mathcal{P}(v,w)}
\sum_{i=1}^a \sum_{j=1}^b \|c_i - c'_j\|^2\, p_{ij}
\right)^{1/2}.
\end{equation}
The quantity $d(v,w)$ is the \emph{$2$-Wasserstein distance
between $v$ and $w$} and defines a metric on the space of discrete probability
measures. In the following, we primarily work with the squared distance
$d^2(v,w)$, corresponding to the quadratic OT cost, which is
more convenient in our setting. With a slight abuse of notation, we also write
$d(\Omega_i,\Omega_j)$ for the $2$-Wasserstein distance between the 
histograms associated with the regions $\Omega_i$ and $\Omega_j$ of $I$.

\section{Related Work and Segmentation Models}\label{sec:Relevant_Models}
Building on this notion of region comparison, we now turn to segmentation models that aim at robustness to noise and intensity inhomogeneities.

\subsection{Variational Methods}

A variety of variational models have been proposed in the literature to specifically address intensity inhomogeneity and high noise levels; however, to our knowledge, none of them are superpixel-based.  Among them, the Ali-Rada (AR)~\cite{ali2018image} and Zhang-Zhang (ZZ)~\cite{zhang2015level} models can be viewed as state-of-the-art approaches. Both can be seen as advancements of the classical Local Binary Fitting (LBF) model~\cite{li2007implicit}, which we will first review. To facilitate the explanation, we will present the models for the two-phase case, where the primary goal is to segment the image into foreground and background. We denote the Heaviside function by 
$H:\mathbb{R}\to\mathbb{R},$ defined by 
$H(x)=0$ for $x<0$ and 
$H(x)=1$ for 
$x>0,$ and we denote  by $\phi:\Omega\to\mathbb{R}$ the level set function used to partition the image into foreground $\{x\in\Omega:\phi(x)>0\},$  background $\{x\in\Omega:\phi(x)<0\}$, and contour $\{x\in\Omega:\phi(x)=0\}$.

\subsubsection{Local Binary Fitting (LBF)  Model}
The LBF model~\cite{li2007implicit}, replacing the constant image intensity means of the classical Chan-Vese model~\cite{chan2001active} by adaptive functions $f_1, f_2:\Omega\to\mathbb{R}^c$ optimizes
\begin{equation*}
     \begin{aligned}
         \argmin_{\phi,\,f_1,\,f_2}\:&\lambda_1\int_{\Omega} K(x-y){\|I(y)-f_1(x)\|}^2H(\phi(y))\,\D{x}\D{y}\\
         +\,&\lambda_2\int_{\Omega} K(x-y){\|I(y)-f_2(x)\|}^2(1-H(\phi(y)))\,\D{x}\D{y}\\
         +\,&\mu\int_{\Omega} \delta(\phi(x)) \|\nabla \phi(x)\|\,\D{x}\\
         +\,&\nu\int_{\Omega} \frac{1}{2}{(\|\nabla \phi(x)\|-1)}^2\D{x},
     \end{aligned}
 \end{equation*}
 where $\lambda_1, \lambda_2, \mu, \nu>0$ are fixed parameters and $K$ is the Gaussian kernel  $K(x-y)=\exp(-||x-y||/2\sigma^2)$ with standard deviation~$\sigma$. The first two terms of the model can be viewed as region terms, aiming at fitting image intensities. The $\mu$-term aims at minimizing contour length, while the $\nu$-term resembles a signed distance function. 
 
Optimal functions $f_1,$ $f_2,$ for given $\phi,$  can be derived from  variational calculus:
\[
f_1(x)=
\frac{\int_\Omega K(x-y)H(\phi(y))I(y)\,dy}
     {\int_\Omega K(x-y)H(\phi(y))\,dy},
\qquad
f_2(x)=
\frac{\int_\Omega K(x-y)(1-H(\phi(y)))I(y)\,dy}
     {\int_\Omega K(x-y)(1-H(\phi(y)))\,dy}.
\]
Note that, in the two-dimensional case, the region terms involve four-dimensional integrals. It is also known that the LBF model can be sensitive to initialization~\cite{zhang2010active}.

\subsubsection{Ali-Rada (AR) Model} Given the two spatially varying average functions $f_1, f_2$ in the LBF model, a statistically locally computed approximation $\bar{I}$ of the image $I$ can be defined via 
\begin{equation*}
    \begin{aligned}
        \bar{I}(x)=f_1(x)H(\phi(x))+f_2(x)(1-H(\phi(x))).
    \end{aligned}
\end{equation*}
The AR model~\cite{ali2018image}, utilizing a
combination of this locally computed denoising constrained surface
and a denoising fidelity term, optimizes

\begin{equation*}
     \begin{aligned}
         \argmin_{\phi,\, f_1,\, f_2}\:&\lambda\int_{\Omega} \|I(x)-\bar{I}(x)\|^2\,\D{x}\\
         +\,&(1-\lambda)\int_{\Omega}\sum_{k=1}^c \Big(\log(\bar{I_k}(x))+I_k(x)/\bar{I}_k(x)\Big)\,\D{x}\\
         +\,&\mu\int_{\Omega} \delta(\phi(x)) \|\nabla \phi(x)\|\,\D{x},
     \end{aligned}
 \end{equation*}
where $\mu>0$ and $\lambda\in [0,1]$ are given constants.

 \subsubsection{Zhang-Zhang (ZZ) Model}
The ZZ model \cite{zhang2015level} also employs local information to mitigate the effect of  inhomogeneities. Introducing an unknown bias field $B:\Omega\to \mathbb{R},$ it is assumed that the intensities in a neighborhood of a pixel follow a Gaussian distribution, yielding the  model
\begin{equation*}
     \begin{aligned}
         \argmin_{\phi,\,B,\, c_1,\, \sigma_1, \,c_2, \,\sigma_2}&\int_{\Omega} K_\rho(x-y)\Big(\log(\sigma_1)+\frac{{\|I(y)-B(x)c_1\|}^2}{2\sigma_1^2}\Big)H(y)\,\D{x}\D{y}\\
         +\,&\int_{\Omega} K_\rho(x-y)\Big(\log(\sigma_2)+\frac{{\|I(y)-B(x)c_2\|}^2}{2\sigma_2^2}\Big)
         (1-H(y))\,\D{x}\D{y}\\
         +\,&\mu\int_{\Omega} \delta(\phi(x)) \|\nabla \phi(x)\|\,\D{x},
     \end{aligned}
 \end{equation*}
where $\mu>0$ is a given constant and $K_\rho$ denotes the characteristic function of the radius-$\rho$ ball. The rationale for incorporating $c_1,c_2\in\mathbb{R}^c$ and $\sigma_1,\sigma_2\in\mathbb{R}$ lies in modeling the intensities around the foreground pixel \( x \) as a Gaussian distribution with mean~\( B(x)c_1 \) and standard deviation \( \sigma_1 \), while for background pixels \( x \), the intensities are modeled by a Gaussian distribution with mean \( B(x)c_2 \) and standard deviation \( \sigma_2 \).
 

\subsection{Deep Learning Models: The Segment Anything Model (SAM)} 
Another class of models is given by deep learning models. The past few years have witnessed a surge in deep learning models for image segmentation. Among these models, SAM, introduced in~2023~\cite{kirillov2023segment}, stands out as a state-of-the-art method that has been pre-trained on an extensive dataset of over  1 billion masks across 11 million images. At its core, SAM utilizes a vision transformer architecture~\cite{dosovitskiy2020image}. In this work, we employ SAM’s zero-shot capability to perform object segmentation without requiring prior training on the objects in the image. We apply SAM in both prompted and non-prompted modes, where the prompts function as marker points to assist the model in identifying and segmenting the objects of interest. Recent performance evaluations of the SAM model can be found in~\cite{ALI2025102473, ji2024segment, 10548751}.

\subsection{Superpixel-based Segmentation Models}
Superpixel algorithms group pixels with similar color
and other low-level properties. They are increasingly incorporated into image segmentation algorithms, see, e.g.,~\cite{hao2016superpixel, zhou2017superpixel, 6466882, zhou2015learning,li2019fast,xia2019ivus,liu2022superpixel,liu2012superpixel,wang2017optimal,doi:10.1080/01431161.2017.1354266,xiang2020fast}. Among these approaches, the SMST model from~\cite{wang2017optimal} is the most methodologically similar to our proposed model. Consequently, we provide a brief overview of SMST and include it in our comparison. We begin with a summary of the superpixel generation methods SLIC and Power-SLIC, underlying SMST and our approach, respectively.

\subsubsection{Superpixel Generation: SLIC}\label{sec:slic}
Among the methods available for superpixel generation (see~\cite{stutz2018superpixels,wang2017superpixel} for a comprehensive evaluation), SLIC~\cite{achanta2012slic}, an adaptation of the $k$-means clustering method~\cite{lloyd1982least,macqueen1967some} for superpixel generation,  stands out as one of the most popular and widely used techniques. Given a discretized image of~$N$ pixels, SLIC clusters pixels in the joint spatial–feature space $\mathbb{R}^d\times\mathbb{R}^c,$ where each pixel~$p$ is represented by $(x_p,I(x_p)),$ with $I(x_p)$ taking values in CIELAB space~\cite{robertson1977cie} for color images and in~$\mathbb{R}$ for grayscale images.

The algorithm comprises two main phases. In the first phase, SLIC depends on two parameters,
$m$ and $\alpha$, where $m$ specifies the target number of superpixels and~$\alpha$ is a compactness parameter ($\alpha=10$ is common in 2D). The cluster centers are initialized on a regular grid with spacing
$h = \sqrt{N/m}.$ During the assignment step, each pixel is compared only to cluster centers within a spatial
neighborhood of size proportional to~$h.$  The similarity between a pixel~$p$ and a cluster center $s$ is measured by the
weighted squared Euclidean distance
\[
\mathrm{dist}^2(p,s)
= \|x_p - x_s\|^2
+ \frac{h^2}{\alpha^2}\,\|I(x_p) - I(x_s)\|^2.
\] As in $k$-means, the centers are updated as the centroids of the updated clusters, and the process is iterated until some stopping criterion is fulfilled. In a second phase, connectivity is enforced by relabeling small disconnected components to adjacent superpixels.

\subsubsection{Superpixel Generation: Power-SLIC}\label{sec:powerslic}

In our proposed method, we will employ a recent superpixel generation method called Power-SLIC~\cite{fiedler2020power}. This method improves over SLIC in terms of speed, noise robustness, and compactness. Conceptually, Power-SLIC computes, heuristically, a solution of the fractional assignment problem discussed in Section~\ref{sect:OT}.

More specifically, Power-SLIC first computes pixel subsets $C_j,$ $j=1,\dots,m,$ via the first phase of SLIC. Then a principal component analysis is performed, i.e., for each~$j,$ the sample covariance matrix~$\Sigma_j,$ the centroid $c(C_j),$ and the cardinality $|C_j|$ of the data set $C_j$ are calculated. 
 Setting $A_j:=\Sigma_j^{-1},$ this defines an  ellipsoidal norm $||\cdot||_{A_j}$ via $||x||_{A_j}:=\sqrt{x^TA_jx},$ If one then solves the fractional assignment problem from Section~\ref{sect:OT} with $a=N,$ $b=m,$ $w_j=|C_j|,$  and quadratic costs $d_{ij}=||x_i-c(C_j)||^2_{A_j},$ for $i=1,\dots,N$ and $j=1,\dots,m,$ then its dual linear problem yields parameters $\mu_1^*,\dots,\mu_m^*,$ which define cells

\[V_i:=\{x\in\Omega: ||x-c(C_i)||^2_{A_i}-\mu^*_i\leq ||x-c(C_j)||^2_{A_j}-\mu^*_j, \quad \textnormal{for all } j=1,\dots,m\};\] see~\cite{fiedler2020power} for details. 
 The  $m$-tuple $(V_1,\dots,V_m)$ of cells  is referred to in the literature as \emph{anisotropic power diagram} or \emph{generalized balanced power diagram}~\cite{altendorf20143d,alpers2015generalized}. These  possess an inherent geometric structure, generalizing the concepts of \emph{Voronoi} and \emph{Laguerre diagrams} (see, e.g.,~\cite{aurenhammer2013voronoi}). The discrete superpixels are then the sets $V_i\cap\Omega',$ $i=1,\dots,m.$ To reduce computational complexity, Power-SLIC uses a  heuristic parameter update and
sets
\[
\mu_i^\ast := \left(\frac{|C_i|}{\kappa_d}\sqrt{\det(A_i)}\right)^{2/d},
\]
with \(\kappa_d\) denoting the volume of the \(d\)-dimensional unit ball; see~\cite{alpers2023dynamic} for a discussion of the underlying rationale in the context of discriminant analysis.

\subsubsection{Superpixel-based Minimum Spanning Tree (SMST) Model} 
Superpixel-based segmentation methods frequently employ a combination of merging and splitting phases. The SMST model described in~\cite{wang2017optimal} constitutes a state-of-the-art example of this paradigm.

The SMST model performs the segmentation in four stages. First, superpixels are generated via~SLIC. The superpixels are represented as nodes in a graph, with edges weighted by the dissimilarity between nodes, and a minimum spanning tree is then computed based on this graph. Second, superpixels are gradually merged using the MST structure (the implementation uses the approach from~\cite{guo2008regionalization}). During the merging several key parameters, such as the inter-cluster variance, are calculated and monitored. In a third stage,  the parameters are analyzed to yield an estimate of the  number of objects in the image. In the fourth stage, the superpixels are split to generate the initial MST, and re-merging is carried out until the determined number of objects is reached.

\section{Proposed SP (Superpixel) Model}\label{sec:segmodels} 

Our model, which we refer to as the \emph{SP (superpixel) model}, is a greedy, adjacency-constrained, OT-based energy-minimization model formulated on the region-adjacency graph of an initial superpixel partition. Let
$\mathcal{Q}_m = \{\Omega_1,\dots,\Omega_m\}$ denote an oversegmentation of the discretized image~$I$ into~$m$ nonempty superpixels, and let~$v_{\Omega_i}$ denote the histogram associated with region $\Omega_i$; see Section~\ref{sect:hist}. We consider the region-adjacency graph
$G = (V,E),$ $V = \{1,\dots,m\},$ 
where $(i,j)\in E$ if and only if $\Omega_i$ and $\Omega_j$ share a common boundary. For each adjacent pair $(i,j)\in E$, we define the dissimilarity
$
E_{ij} = d^2(v_{\Omega_i},v_{\Omega_j}),$ 
the squared $2$-Wasserstein distance between their color histograms; see~\eqref{eq:dis_in_H_define}.

To each region $\Omega_i$ we associate a heterogeneity value $h_i$, initialized as $h_i = 0$. Whenever two regions $\Omega_i$ and $\Omega_j$ are merged, we update $
h_i \gets E_{ij},$ 
so that $h_i$ records the most recent merge cost involving $\Omega_i$. The \emph{regularized merge cost} associated with merging adjacent regions $\Omega_i$ and $\Omega_j$ is then defined as
\begin{equation}\label{eg:mergecost}
\kappa_{ij} = E_{ij} - h_i - h_j.
\end{equation} The rationale behind this construction is to encourage merges that are both locally coherent in color distribution and consistent with the hierarchical structure of previously merged regions.  The heterogeneity terms~$h_i$ act as a memory mechanism that penalizes merges that are locally incompatible with the merge history.

A segmentation with $n<m$ regions is described by a sequence of admissible merges $\mathcal{M} = \bigl((i_1,j_1),\dots,(i_{m-n},j_{m-n})\bigr),$ 
where each $(i_k,j_k)$ is an adjacent pair in the current region-adjacency graph. For such a merge sequence $\mathcal{M}$, we define the \emph{regularized pairwise optimal-transport energy}
\[
\mathcal{E}_{\mathrm{reg}}(\mathcal{M})
=
\sum_{k=1}^{m-n}
\Bigl( d^2(v_{\Omega_{i_k}},v_{\Omega_{j_k}}) - h_{i_k} - h_{j_k} \Bigr),
\]
with the heterogeneity values $h_i$ evolving according to the update rule above. The SP model is the global minimization problem
\[
\argmin_{\mathcal{M}\in\mathfrak{A}_{m\to n}}
\mathcal{E}_{\mathrm{reg}}(\mathcal{M}),
\]
where $\mathfrak{A}_{m\to n}$ denotes the set of all admissible merge sequences reducing the initial partition~$\mathcal{Q}_m$ to a partition of size $n$.

\section{SP Optimization Algorithms}\label{sect:SPAlgs} We next introduce three algorithmic variants of the SP  model, corresponding to different assumptions on the available input information.

\subsection{Unsupervised SP Algorithm} 
The unsupervised SP algorithm takes as input an image $I$, a prescribed number of initial superpixels $m\in\mathbb{N}$, a target number of regions $n\leq m$, and a number $k\in\mathbb{N}$ of color representatives used in the histogram computations. Given these inputs, the algorithm constructs an initial partition of $I$ into $m$ superpixels via Power-SLIC and then iteratively merges regions until exactly $n$ remain.

To approximately minimize the SP model, we employ a greedy region‑merging strategy. At each iteration, among all adjacent region pairs $(i,j)$, the algorithm selects the pair that minimizes the regularized merge cost $\kappa_{ij}$~(see~\eqref{eg:mergecost}). The region $\Omega_j$ is merged into $\Omega_i$, the heterogeneity value is updated according to $h_i \gets E_{ij}$, and the region-adjacency graph is updated accordingly. This greedy procedure yields a hierarchy of progressively coarser partitions that provides an efficient approximation to a minimizer of the regularized OT energy $\mathcal{E}_{\mathrm{reg}}$. The full unsupervised SP optimization algorithm is summarized in Algorithm~\ref{alg:algorithm}.

\begin{algorithm}[h]
\caption{Unsupervised SP optimization algorithm}\label{alg:algorithm}
\begin{algorithmic}[1]
\Require Discretized image $I$, number of superpixels $m\in\mathbb{N}$, number $n\leq m$ of objects, number of color representatives $k\in\mathbb{N}$.
\Ensure Partition $\{\Omega_i:\Omega_i\neq\emptyset\}$ of $I$ into $n$ superpixel regions.
\State Compute $k$ representative colors of $I$ for histogram-based distance computations.
\State Partition $I$ into $m$ nonempty superpixels $\Omega_1,\,\Omega_2,\,\dotsc\,\Omega_m$ using Power-SLIC.  \label{alg:linepowerslic}
\State Initialize region heterogeneity levels $h_i \gets 0$ for all $i=1,\dots,m$.
\State Initialize $E_{ij} \gets \operatorname{d}^2(\Omega_i,\Omega_j)$ for all adjacent region pairs $i<j$. \label{alg:linedist1}
\State Initialize a priority queue $Q$ with entries
\[\bigl(E_{ij} - h_i - h_j,\; i,\; j\bigr) \]
for all adjacent region pairs $i<j$.
\State Set $r\gets m$.

\While{$r>n$}
    \State Extract $(\kappa,i,j)\in Q$ with minimal key $\kappa.$
    \If{($\Omega_i \neq \emptyset$ and $\Omega_j \neq \emptyset$)}
        \State Merge $\Omega_j$ into $\Omega_i$. \label{alg:merge_step}
        \State  $h_i \gets E_{ij}$.
        \State $r \gets r-1$.
        \State  $\mathrm{LT}(r) \gets h_i$. \Comment{Used only when $n$ is determined via ROC-LT.}
        \For{each region $\Omega_k$ adjacent to $\Omega_i$}
            \State Recompute $E_{ik} \gets \operatorname{d}^2(\Omega_i,\Omega_k)$.\label{alg:linedist2}
            \State Insert $(E_{ik}-h_i-h_k,\; i,\; k)$ into $Q$.
        \EndFor
    \EndIf
\EndWhile
\end{algorithmic}
\end{algorithm}

\subsection{SP Algorithm with Automatic Region Number Selection} Algorithm~\ref{alg:algorithm} can be extended to automatically determine the number of regions from the evolution of the merging process. The algorithm is run until a single region remains, yielding a sequence of merges. At each iteration, two regions~$\Omega_i$ and~$\Omega_j$ minimizing the regularized merge cost $\kappa_{ij}$~(see~\eqref{eg:mergecost}) are merged, and we define
\begin{equation*}
\mathrm{LT}(r) := \operatorname{d}^2(\Omega_i,\Omega_j),
\end{equation*}
where~$r$ denotes the number of regions after merging. The resulting sequence \(\{\mathrm{LT}(r)\}\) records the costs of successive merges as~$r$ decreases. Following the rate-of-change (ROC) criteria introduced and motivated in \cite{wang2017optimal,drǎguct2010esp}, we consider the relative discrete variation
$\mathrm{ROC}(r)
=
(\mathrm{LT}(r-1)-\mathrm{LT}(r))/\mathrm{LT}(r),$
 whose  local maximizers~$r^*$ indicate suitable values of~$n$ (selected by visual inspection in all experiments in this paper). The corresponding partition is obtained by reversing the final~$n-1$ merges.

\subsection{Marker-Based (Supervised) SP Algorithm}\label{SPmarker}
When user-specified markers 
$(p_i,I_i)$ are available, Algorithm~\ref{alg:algorithm} replaces the prescribed number of regions by marker locations $p_i$ with associated classes~$I_i,$ assuming exactly~$n$ distinct classes.

After superpixel initialization (line~\ref{alg:linepowerslic}), marker regions~$M=\{M_I\}_I$ are defined as unions of initial superpixels containing markers of
class~$I$.

In lines~\ref{alg:linedist1} and~\ref{alg:linedist2}, the dissimilarity $\operatorname{d}^2(\Omega_i,\Omega_j)$ is replaced by a
 marker-aware dissimilarity~$\operatorname{d}_M^2(\Omega_i,\Omega_j)$, which defines region dissimilarity relative to a class-specific reference region in the presence of a unique marker class.  It is given by
\[
\operatorname{d}_M^2(\Omega_i,\Omega_j)
:= \operatorname{d}^2(\Omega_i,C(\Omega_i,\Omega_j))+\operatorname{d}^2(\Omega_j,C(\Omega_i,\Omega_j)),
\]
where
\[
C(\Omega_i,\Omega_j):=
\begin{cases}
M_I, & \text{if } \Omega_i\cup \Omega_j \text{ contains markers of exactly one class } I,\\
\Omega_i, & \text{otherwise.}
\end{cases}
\] 

Moreover, the loop condition is amended to $r>n$ and $Q\neq\emptyset$.
Finally, the merge condition $\Omega_i\neq\emptyset$ and $\Omega_j\neq\emptyset$ is strengthened by
requiring that all markers contained in $\Omega_i\cup\Omega_j$ belong to the same class. Note that the output is a partition of~$I$ into superpixel regions consistent with the prescribed marker classes; however, the number of regions is not fixed a priori and may exceed the number of classes.

\subsection{Computational Cost}

We analyze Algorithm~\ref{alg:algorithm} in the unit-cost RAM model (see, e.g.,~\cite{cormen2009introduction,papadimitriou1994computational}), assuming standard
implementations of priority queues (binary heaps) and adjacency lists, with
region merges performed using standard representative-based bookkeeping.  The superpixel partition induces a region adjacency graph with $E$ edges and $m$ vertices. 
We further assume that each pixel color can be assigned to one of the $k$
representative colors in constant time, for instance via a precomputed lookup
table built during the representative-color computation. Let $T_{\mathrm{rep}}(N,k)$ denote the worst-case running time required to compute
the~$k$ representative colors in our image~$I$ containing~$N$ pixels, and let $T_{\mathrm{OT}}(k)$ denote the
worst-case running time required to solve a balanced transportation problem of
size at most~$k$.

\begin{proposition}Algorithm~\ref{alg:algorithm} runs in worst-case time
\[
O\bigl(
T_{\mathrm{rep}}(N,k) + N + m^2 \log m + E\,T_{\mathrm{OT}}(k) + mk
\bigr).
\]
\end{proposition}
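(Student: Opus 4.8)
The plan is to charge the running time phase by phase, matching each summand of the claimed bound to a distinct stage of Algorithm~\ref{alg:algorithm} and treating $T_{\mathrm{rep}}(N,k)$ and $T_{\mathrm{OT}}(k)$ as black boxes. I would partition the work into (i) preprocessing --- computing the representative colors and building the $m$ region histograms; (ii) initialization --- forming the region-adjacency graph, the initial edge distances, and the priority queue; and (iii) the merge loop. Stages (i) and (ii) amount to static accounting, while the genuine difficulty sits in stage (iii), where each merge edits the graph and forces distance recomputations.

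For stage (i), the representative-color computation is $T_{\mathrm{rep}}(N,k)$ by definition, and a single sweep over the $N$ pixels assigns each one to its representative color (constant time via the lookup table) and to its superpixel while accumulating bin counts, yielding the $O(N)$ term. Normalizing and storing $m$ histograms of $k$ bins is $O(mk)$; moreover, since a merged histogram is the count-weighted sum of its two constituents, computable in $O(k)$, and there are fewer than $m$ merges, all histogram maintenance also fits within $O(mk)$, which supplies the $mk$ term. For stage (ii), each evaluation of $d^2(\Omega_i,\Omega_j)$ is one balanced transportation problem on histograms of size at most $k$ and hence costs $T_{\mathrm{OT}}(k)$; line~\ref{alg:linedist1} performs one solve per edge, contributing $E\,T_{\mathrm{OT}}(k)$, and the heap is built from at most $\binom{m}{2}=O(m^2)$ entries.

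For stage (iii) I would adopt a lazy-deletion heap: emptied regions are not removed eagerly, and any extracted triple whose two regions are not both nonempty is discarded in $O(1)$. The pivotal combinatorial estimate is the total number of insertions. There are at most $m-n<m$ effective merges, and each inserts at most one entry per current neighbor of the merged region, i.e.\ fewer than $m$ entries; together with the $O(m^2)$ initial entries this bounds the insertions --- and therefore the extractions and discards --- by $O(m^2)$. Since the heap never exceeds $O(m^2)$ entries, each operation costs $O(\log(m^2))=O(\log m)$, giving the $O(m^2\log m)$ term.

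The step I expect to be the main obstacle is reconciling the OT-solve count with the $E\,T_{\mathrm{OT}}(k)$ term. Beyond the $E$ initial solves, line~\ref{alg:linedist2} triggers one recomputation per insertion, so the naive bound on recomputations is $O(m^2)$, and one cannot in general reduce this to $O(E)$: a region that repeatedly absorbs its neighbors (for instance, a single region enclosing many others) forces $\Theta(m^2)$ recomputations even when $E=O(m)$. Making the stated bound precise therefore hinges on charging each recomputation to an edge of the current adjacency graph and showing the aggregate is $O(E\,T_{\mathrm{OT}}(k))$ --- or, alternatively, on reading $E$ as the $O(m^2)$ worst-case edge count, so that the recomputation cost is absorbed. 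Pinning down this amortization, together with verifying that lazy deletion never re-extracts an emptied region as live (so that both correctness and the $O(1)$-discard accounting hold), is the crux; once it is settled, the five contributions add to give the claimed worst-case time.
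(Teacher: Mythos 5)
Your stage-by-stage accounting is, in substance, identical to the paper's own proof: $T_{\mathrm{rep}}(N,k)$ for the representative colors, $O(N)$ for Power-SLIC and histogram setup, one OT solve per initial adjacency giving $O(E\,T_{\mathrm{OT}}(k))$, lazy discarding of stale heap entries, $O(m^2)$ total insertions and extractions at $O(\log m)$ each, and $O(k)$ per merge for histogram maintenance, i.e.\ $O(mk)$ overall. On every step except the one you flag, the two arguments coincide.

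The crux you flag is genuine, and you should know that the paper does not close it either. The paper's proof bounds the loop's recomputations by noting that each merge triggers distance evaluations against the $O(m)$ neighbors of the newly formed region, over at most $m-1$ merges, ``yielding a total cost of $O(m^2\,T_{\mathrm{OT}}(k))$ for distance recomputations''---and then simply asserts that combining the bounds gives the stated result. It does not: $m^2\,T_{\mathrm{OT}}(k)$ is not $O\bigl(E\,T_{\mathrm{OT}}(k)+m^2\log m\bigr)$ once $E=o(m^2)$ and $T_{\mathrm{OT}}(k)$ grows faster than $\log m$, and your absorbing-region example (a star adjacency graph: $E=m-1$, yet $(m-2)+(m-3)+\cdots+1=\Theta(m^2)$ recomputations at line~\ref{alg:linedist2}, realizable as one large superpixel region enclosing many mutually non-adjacent ones) shows that no charging scheme can push the loop's OT cost onto the initial edge set. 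So the proposition as printed is consistent with its proof only if one reads $E$ as the trivial general-graph bound $\Theta(m^2)$, or treats $k$ (hence $T_{\mathrm{OT}}(k)$) as a constant---which is what the paper does in practice ($k=15$), since then $m^2\,T_{\mathrm{OT}}(k)=O(m^2)$ is absorbed by $m^2\log m$ and the downstream simplification to $O(N+m^2\log m)$ survives. The bound that the paper's proof (and yours) actually establishes is $O\bigl(T_{\mathrm{rep}}(N,k)+N+m^2\log m+m^2\,T_{\mathrm{OT}}(k)+mk\bigr)$, i.e.\ the stated bound with $E\,T_{\mathrm{OT}}(k)$ replaced by $m^2\,T_{\mathrm{OT}}(k)$. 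In short: your attempt is as complete as the paper's proof, and the missing step you honestly declined to fake cannot be supplied---it requires weakening the statement.
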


\begin{proof}The algorithm consists of an initialization phase followed by a merging loop. 

Computing $k$ representative colors takes $T_{\mathrm{rep}}(N,k)$ time.
Superpixelization of $I$ via Power-SLIC, including histogram initialization,
requires $O(N)$ time, while initializing region heterogeneity levels takes
$O(m)$. For each adjacent region pair, one distance
$\operatorname{d}(\Omega_i,\Omega_j)$ is evaluated and inserted into the priority
queue, yielding a total cost of $O(E\,T_{\mathrm{OT}}(k))$ for distance
computations and $O(E)$ queue insertions.

The main loop performs~ $m-n$ merge operations. The worst case occurs for~$n=1,$
 which we therefore analyze. A single merge operation consists of merging the two regions as sets, merging their
adjacency information, and merging the associated histograms with $k$ bins.

When two regions are merged, they are removed, and a new, larger region is formed. Distance computations and priority queue insertions are
performed for all adjacent pairs involving the newly formed region and its $O(m)$ neighbors, yielding a total cost of $O(m^2\,T_{\mathrm{OT}}(k))$ for distance recomputations and
$O(m^2)$ priority queue insertions overall, each costing $O(\log(m)).$

Each merge incurs an $O(k)$ cost for histogram updates. As at most $m-1$ merges
are performed, the total cost of histogram updates is $O(mk)$.

Unsuccessful extractions from the priority queue discard stale entries and do
not affect the asymptotic running time.

Combining these bounds yields the stated worst-case running time.
\end{proof}

The superpixel adjacency graph typically induces a planar subdivision and
therefore satisfies \(E = O(m)\) by classical bounds for simple planar graphs
(see, e.g.,~\cite{diestel2017graph}). In contrast, for a general graph one only
has the trivial bound $E = O(m^2)$. 

Balanced transportation problems with $k$ supply and $k$ demand nodes can be
solved in strongly polynomial time in the unit-cost RAM model; see, for example,
the cost-scaling algorithm~\cite{orlin1993}. In our implementation, the
distance computations~$\operatorname{d}(\Omega_i,\Omega_j)$ are carried out using
the network simplex--based optimal transport solver of Bonneel et~al.\
\cite{bonneel2011displacement,BonneelCode}. While this solver does not admit a
known polynomial worst-case complexity bound, it is empirically observed to
exhibit near-quadratic running time in many practical instances~\cite{bonneel2011displacement}. In all
experiments reported below, we fix $k=15$.

Concerning the term $T_{\mathrm{rep}}(N,k)$, our implementation computes
representative colors from an auxiliary Power-SLIC partition into a fixed number
of regions (300 in all experiments). Color vectors are averaged within each
auxiliary region, and the $k$ representative colors are selected as the $k$ most
frequent averages. Since the number of auxiliary regions is fixed, this
procedure can be implemented in $O(N)$ time, and hence
$T_{\mathrm{rep}}(N,k)=O(N)$ in our setting.

Combining the planarity assumption $E=O(m)$ with the implementation-specific specializations above, and noting that $k$ is fixed in all experiments, the effective worst-case running time of Algorithm~\ref{alg:algorithm} simplifies to~$O\bigl(N + m^2\log m\bigr).$

\section{Numerical Experiments}\label{sec:experiments}
In this section, we evaluate the quality and efficiency of our model under intensity inhomogeneities. For comparison, we include the following models:

\begin{itemize}
    \item \textbf{SP}: our SP model (optimized via Algorithm \ref{alg:algorithm} and variants; see Section~\ref{sect:SPAlgs}),
    \item \textbf{AR}: the Ali-Rada model \cite{ali2018image}, 
    \item \textbf{ZZ}: the Zhang-Zhang model \cite{zhang2015level},
    \item \textbf{SMST}: the superpixel-based minimum spanning tree model \cite{wang2017optimal}, and
    \item \textbf{SAM}: the Segment Anything model \cite{kirillov2023segment}.
\end{itemize}

We next describe the implementation details and computational settings for the models under comparison.

SP is implemented in C++ incorporating the network simplex solver~\cite{BonneelCode} for OT computations. 
The MATLAB code is (obtained by compiling the C++ code into MEX modules) is available at \href{https://github.com/HJS-design/superpixel-based-image-segmentation}{GitHub}.
 As remarked before, we use ~$k=15$ color representatives in all experiments. 

For the AR and ZZ models, we employ the level-set framework of \cite{roberts2019convex}, which uses additive operator splitting (AOS) with convex relaxation and avoids explicit reinitialization. AOS decomposes each 2-D Laplacian update into two 1-D tridiagonal solves. The original implementation relies on MATLAB’s generic solver \texttt{mldivide}, which does not exploit this structure, so we replace it with the linear-time Thomas algorithm \cite{higham2002accuracy}. In the ZZ model, we reformulate the quadruple integral as a set of 2D convolutions, reducing the cost from 
quadratic to  quasi-linear using FFT acceleration.

The SMST model is implemented in C++ using a minimum-spanning-tree representation together with a custom implementation of the REDCAP procedure \cite{guo2008regionalization} as no public implementation is available. Subtree variances are expressed via $\mathrm{var}(X)=\operatorname {E} \left[X^{2}\right]-\operatorname {E} [X]^{2}$, so that all required statistics reduce to sums of $X,$ $X^2,$ and region areas over subtree nodes. A depth-first traversal linearizes the tree, mapping each subtree to a contiguous memory interval and enabling all aggregates to be computed by prefix sums; the interval structure is updated after each bipartition.

 The SAM model was executed on Google Colab using a Tesla T4 GPU. All other models were run on a Lenovo laptop equipped with a 13th-generation Intel(R) Core(TM) i9-13900HX CPU and 40 GB RAM.

\subsection{Datasets}\label{sect:datasets}
We evaluate three datasets exhibiting challenges typical in microscopy and biomedical imaging, including illumination variability, heterogeneous object morphology, large object counts, and regional intensity inhomogeneity.

\textbf{Dataset~1} comprises a synthetic video and two hematoxylin and eosin (H\&E)-stained images collected from public online sources~\cite{systemicMastocytosis2020,chan2014wonderful}. H\&E staining is standard in histopathology, with nuclei appearing blue–purple and cytoplasm pink. The images are provided in RGB format with varying image sizes and are used solely for qualitative illustration and timing comparisons, as no ground-truth annotations are available.
The synthetic video consists of~74 MATLAB-generated frames. The first frame shows a 3D mitochondrial mesh generated via the \texttt{isosurface} function and rendered with Phong illumination, with the light source positioned above and to the right of the camera. Subsequent frames are obtained by rotating this initial view. This dataset was constructed to study segmentation under pronounced frame-to-frame illumination changes.

\textbf{Dataset 2}, released as supplementary material in~\cite{wienert2012detection}, contains 36 digitized H\&E-stained microscopy images (600 × 600 pixels) with 7931 nuclei annotated by three expert pathologists; only consensus annotations are retained. It is designed for cell nuclei detection.

\textbf{Dataset 3} consists of 50 annotated H\&E-stained histological images from triple-negative breast cancer patients~\cite{naylor2018segmentation}. The images were acquired at 40× magnification using a Philips Ultra Fast Scanner (1.6RA) at the Curie Institute and contain a total of 4022 annotated cells.

Representative samples from the three datasets are shown in Fig.~\ref{fig:sample_images}. In (a), the first frame of the synthetic video in Dataset 1 illustrates the strong illumination gradients produced by distance-dependent shading together with orientation-dependent interactions between vertex normals and the light direction. The H\&E images in (b) and (c), also from Dataset 1, depict a mast cell infiltrate from a patient with non-alcoholic steatohepatitis and cirrhosis and, respectively, a sample from a patient with sclerosing polycystic adenosis of the parotid gland; the former consists primarily of purple nuclei, white cytoplasm, and pink extracellular tissue. Subfigures (d) and (e) show two examples from Dataset 2 accompanied by their ground-truth nuclei annotations, while (f) and (g) present a representative image from Dataset 3 together with its corresponding cell mask.

\begin{figure}[htb]
\centering
\subfloat[\label{fig_sub:rending_image}]{\includegraphics[width=0.249\linewidth]{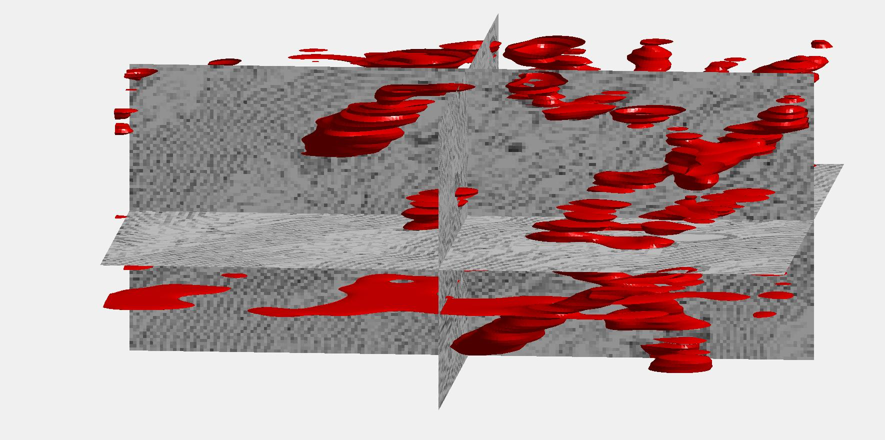}}
\hfill
\subfloat[\label{fig_sub:he1_image}]{\includegraphics[width=0.249\linewidth]{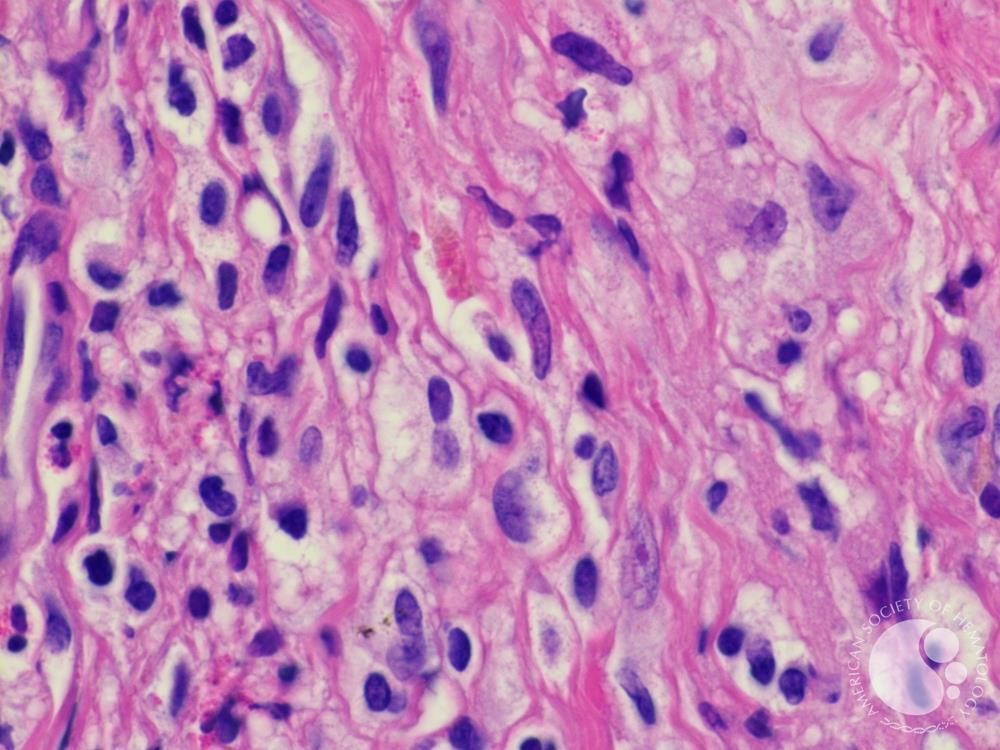}}
\hfill
\subfloat[\label{fig_sub:he2_image}]{\includegraphics[width=0.249\linewidth]{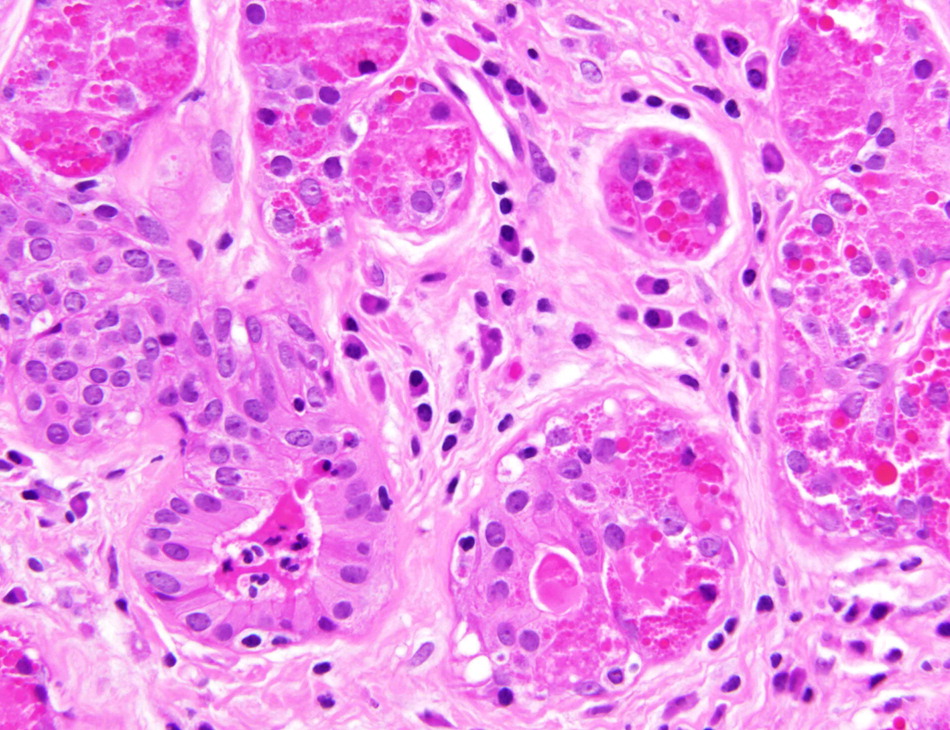}}

\subfloat[\label{fig_sub:he3_image}]{\includegraphics[width=0.249\linewidth]{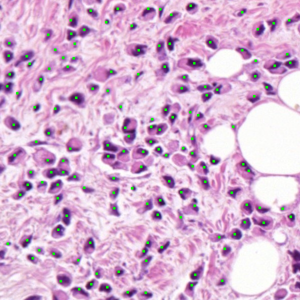}}
\hfill
\subfloat[\label{fig_sub:he4_image}]{\includegraphics[width=0.249\linewidth]{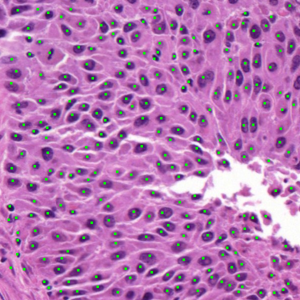}}
\hfill
\subfloat[\label{fig_sub:TNBC_image}]{\includegraphics[width=0.249\linewidth]{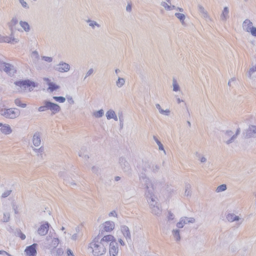}}
\hfill
\subfloat[\label{fig_sub:TNBC_image_with_label}]{\includegraphics[width=0.249\linewidth]{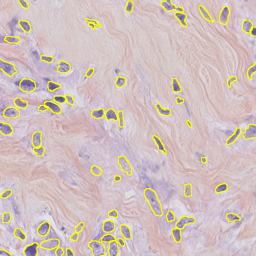}}
\caption{Samples from Datasets 1–3. (a) First frame of the 74-frame video in Dataset 1 (1770 × 880) with a superimposed red mesh. (b,c) H\&E-stained tissue images from Dataset 1 (1000 × 750; 950 × 730). (d,e) Cell images from Dataset 2 (600 × 600) with nuclei annotations. (f,g) Image from Dataset 3 (512 × 512) with annotation.}\label{fig:sample_images}
\end{figure}

\begin{figure}[htb]
\centering

\subfloat[\label{fig_sub:decide_number_curve1}]{\includegraphics[width=0.24\linewidth]{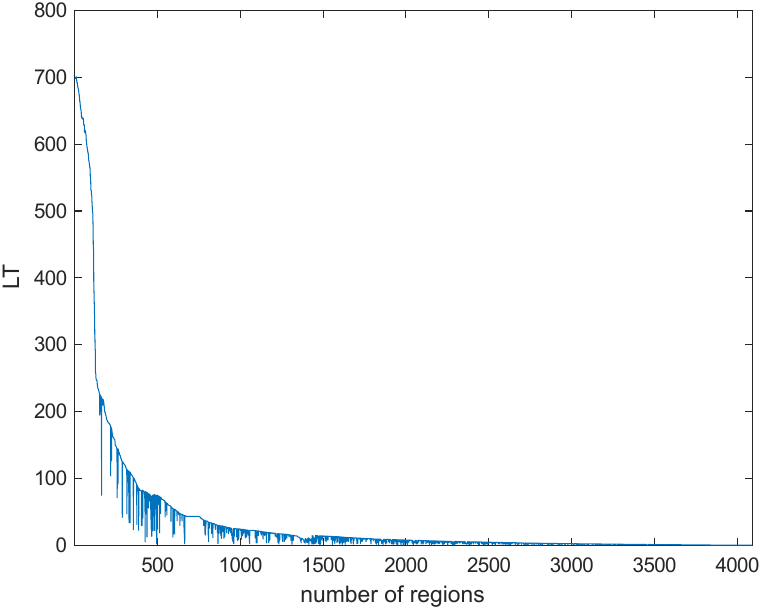}}
\hfill
\subfloat[\label{fig_sub:decide_number_curve1_diff}]{\includegraphics[width=0.24\linewidth]{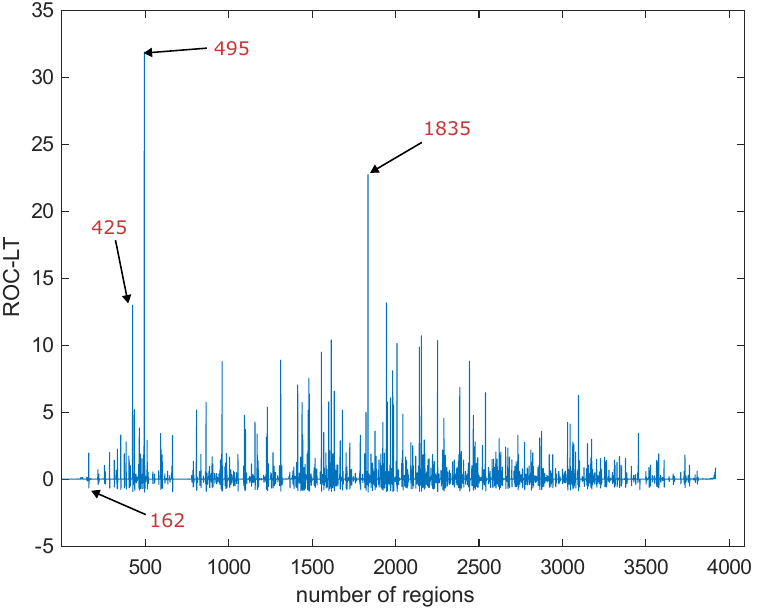}}
\hfill
\subfloat[\label{fig_sub:decide_number_curve2}]{\includegraphics[width=0.24\linewidth]{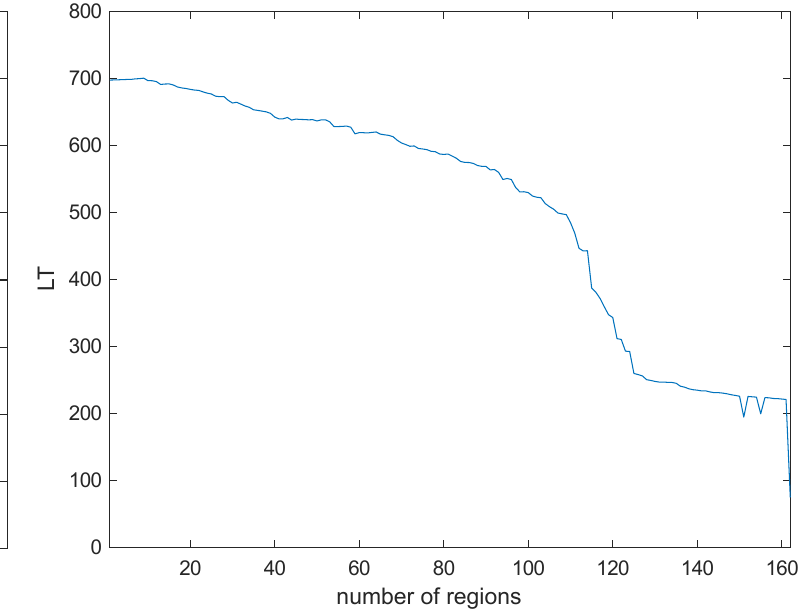}}
\hfill
\subfloat[\label{fig_sub:decide_number_curve2_diff}]{\includegraphics[width=0.24\linewidth]{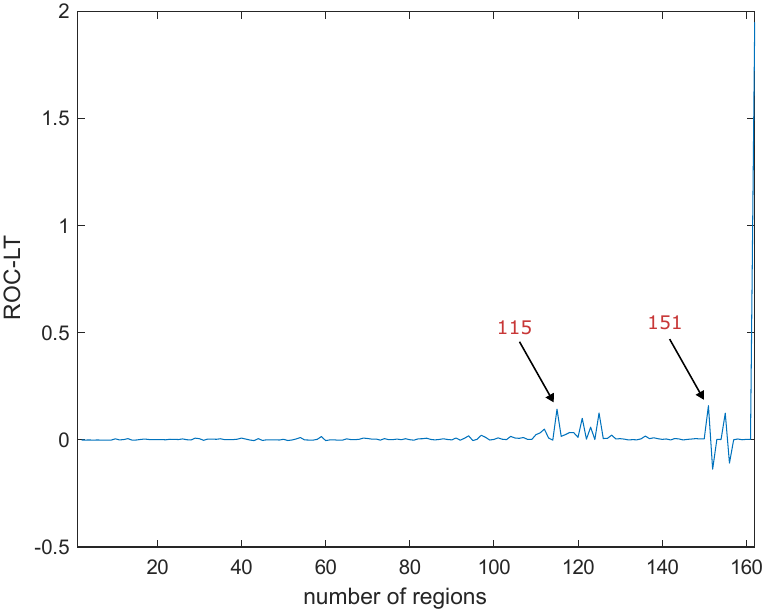}}

\subfloat[\label{fig_sub:decide_number_parameters_1835}]{\includegraphics[width=0.199\linewidth]{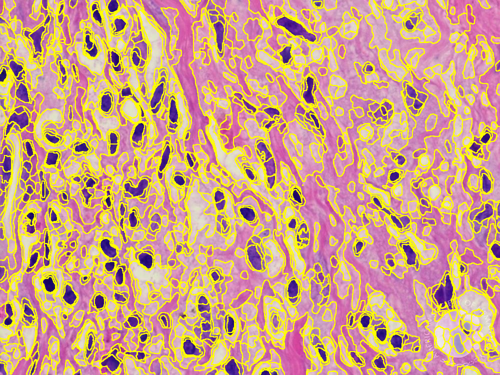}}
\hfill
\subfloat[\label{fig_sub:decide_number_parameters_495}]{\includegraphics[width=0.199\linewidth]{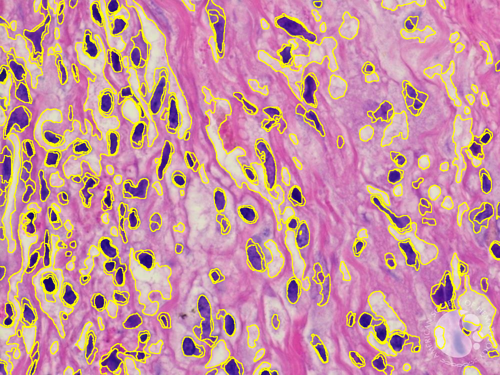}}
\hfill
\subfloat[\label{fig_sub:decide_number_parameters_425}]{\includegraphics[width=0.199\linewidth]{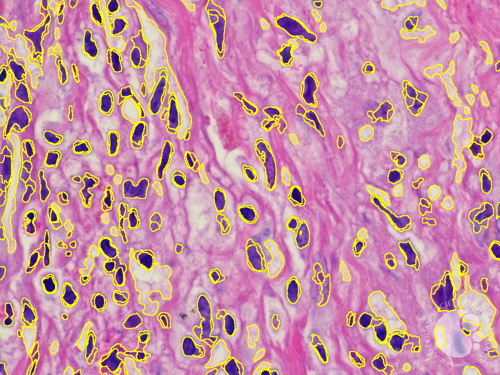}}
\hfill
\subfloat[ \label{fig_sub:decide_number_parameters_151}]{\includegraphics[width=0.199\linewidth]{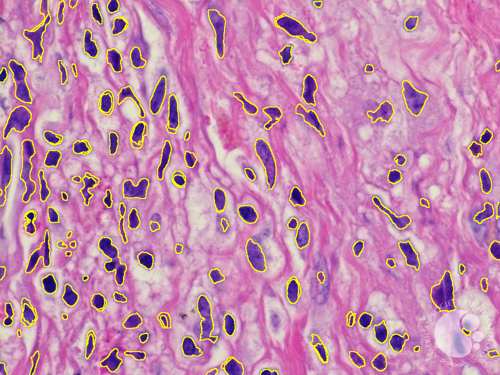}}
\hfill
\subfloat[ \label{fig_sub:decide_number_parameters_115}]{\includegraphics[width=0.199\linewidth]{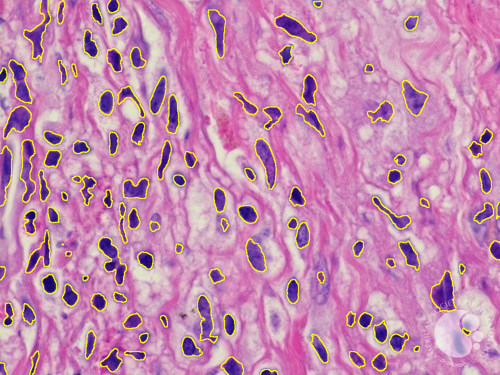}}

\caption{Selection of the optimal number of regions. The LT curve (a) shows a general decrease with minor fluctuations, while the ROC--LT curve (b) remains essentially zero over $[1,\,162]$, suggesting an optimal region number below $162$. Zoomed views (c, d) reveal peaks at $115$ and $151$. Segmentations with 1835, 495, 425, 151, and 115 regions are shown in (e--i).}\label{fig:decide_number}
\end{figure}

\begin{figure}[htb]
\centering
\subfloat[\label{fig_sub:noise_image}]{\includegraphics[width=0.15\linewidth]{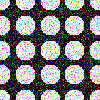}}
\hfill
\subfloat[\label{fig_sub:noise_in}]{\includegraphics[width=0.15\linewidth]{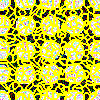}}
\hfill
\subfloat[\label{fig_sub:noise_r_200}]{\includegraphics[width=0.15\linewidth]{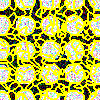}}
\hfill
\subfloat[\label{fig_sub:noise_r_100}]{\includegraphics[width=0.15\linewidth]{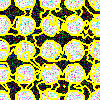}}
\hfill
\subfloat[\label{fig_sub:noise_out_sp}]{\includegraphics[width=0.15\linewidth]{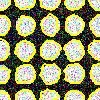}}
\hfill
\subfloat[\label{fig_sub:noise_out_super}]{\includegraphics[width=0.15\linewidth]{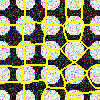}}
\caption{Segmentation vs. Oversegmentation. (a) Noisy grayscale image containing ~25 white objects. (b)  Power-SLIC result with 300 superpixels, serving as  input to our algorithm. (c–e) Intermediate and final steps of the algorithm, merging into 200, 100, and finally 25 regions, respectively. (f)  Power-SLIC output with~25 superpixels.}\label{fig:circle_noise}
\end{figure}

\section{Results}\label{sect:results}
In this section, we present both qualitative and quantitative evaluations of the proposed method in comparison with established approaches. The analysis addresses segmentation accuracy under varying conditions, computational performance, and the influence of user interaction.

\subsection{Qualitative Results}
We first investigate qualitative aspects of the segmentations.

\subsubsection{Optimal Number of Segmentation Regions} Based on the image in Fig.~\ref{fig_sub:he1_image}, we demonstrate how the ROC mechanism can identify the optimal number of segmentation regions. The LT and ROC–LT curves for this image are shown in Figs.~\ref{fig_sub:decide_number_curve1}-\ref{fig_sub:decide_number_curve1_diff}, respectively. 
The curves show only minor fluctuations for region numbers below~162 indicating that suitable segmentations are likely to lie within the interval \([1,\,162]\).  Within this range (Figs.~\ref{fig_sub:decide_number_curve2}-\ref{fig_sub:decide_number_curve2_diff}), local optima seem to lie at 115 and 151, suggesting these as plausible object counts. Segmentation results corresponding to the local maxima in the higher region interval \([162,\,4000]\) result in over-segmented images (Figs.~\ref{fig_sub:decide_number_parameters_1835} to \ref{fig_sub:decide_number_parameters_425}). The segmentation into 151 regions (Fig.~\ref{fig_sub:decide_number_parameters_151}) reveals closely spaced, potentially redundant boundaries, which are resolved by the merging into 115 regions (Fig.~\ref{fig_sub:decide_number_parameters_115}). This 115-region segmentation preserves clear foreground–background separation and can therefore be viewed as optimal solution.

\subsubsection{Segmentation vs. Oversegmentation} Next, we compare the results of the baseline oversegmentation algorithm Power-SLIC with those obtained after applying our merging procedure. Fig.~\ref{fig_sub:noise_image} presents a noisy grayscale image containing 25 disks, some of which are partially occluded. We initialized our method by generating 300 superpixels using Power-SLIC (Fig.~\ref{fig_sub:noise_in}); these superpixels conform to disk boundaries while retaining regular shapes in homogeneous regions. The algorithm then iteratively merged adjacent superpixels, yielding the intermediate results shown in Figs.~\ref{fig_sub:noise_r_200}-\ref{fig_sub:noise_r_100} and the final segmentation depicted in Fig.~\ref{fig_sub:noise_out_sp}. By contrast, considering only the raw superpixel output for 25 superpixels as segmentation is unsatisfactory (Fig.~\ref{fig_sub:noise_out_super}). 

\subsubsection{OT vs. Variance} To assess the contribution of the OT-based metric in isolation, we substituted the superpixel initialization with a uniform rectangular partition and compared the outcome to that of the SMST algorithm. Since SMST also employs an initial superpixel decomposition followed by minimum-spanning-tree merging and variance-based splitting, any discrepancies can be attributed solely to the choice of metric.

Figure~\ref{fig:H&ESEG12_rectangle} shows, in its first row, two representative images from Dataset~1, overlaid with uniform rectangular partitions of varying resolution. Since the task was to segment red cells, only the red channels of the images were provided to both models. The second row of the figure presents the segmentation results from~SP. Despite the rectangular initialization, our approach captured foreground regions using the smallest boxes possible while adhering to the rectangle boundaries. The third row of Fig.~\ref{fig:H&ESEG12_rectangle} shows the SMST results, which detected fewer cells and, across all resolutions, tended to over-segment the background into multiple regions. 

\begin{figure}[htb]
\centering

\subfloat[I1, $40\times40$\label{fig_sub:he1_box_40_image}]{\includegraphics[width=0.249\linewidth]{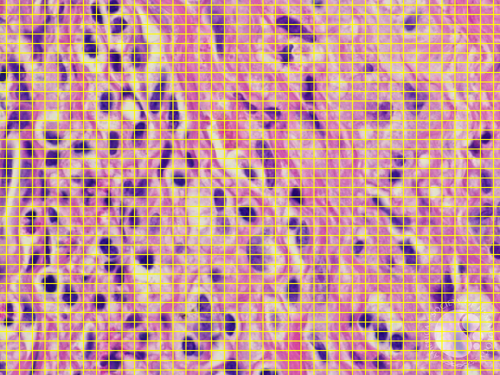}}
\hfill
\subfloat[I1, $30\times30$\label{fig_sub:he1_box_30_image}]{\includegraphics[width=0.249\linewidth]{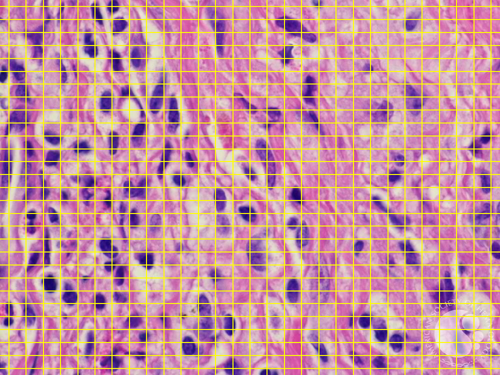}}
\hfill
\subfloat[I2, $80\times80$\label{fig_sub:he2_box_80_image}]{\includegraphics[width=0.249\linewidth]{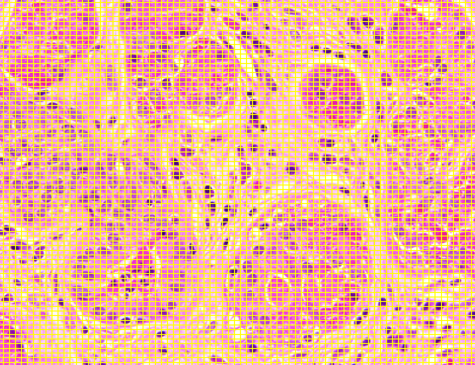}}
\hfill
\subfloat[I2, $70\times70$\label{fig_sub:he2_box_70_image}]{\includegraphics[width=0.249\linewidth]{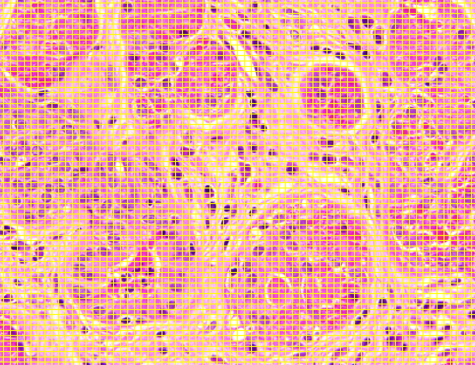}}

\subfloat[SP, I1, $40\times40$\label{fig_sub:he1_box_40_out_sp}]{\includegraphics[width=0.249\linewidth]{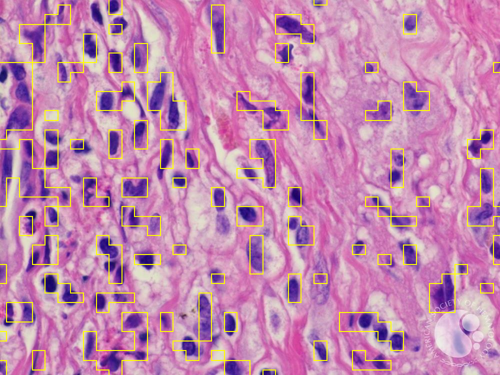}}
\hfill
\subfloat[SP, I1, $30\times30$\label{fig_sub:he1_box_30_out_sp}]{\includegraphics[width=0.249\linewidth]{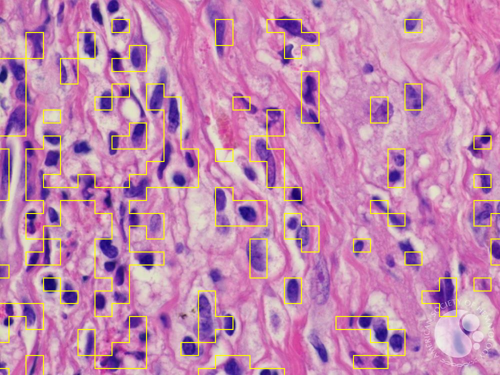}}
\hfill
\subfloat[SP, I2, $80\times80$\label{fig_sub:he2_box_80_out_sp}]{\includegraphics[width=0.249\linewidth]{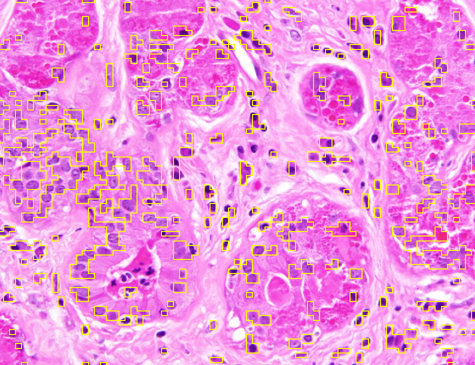}}
\hfill
\subfloat[SP, I2, $70\times70$\label{fig_sub:he2_box_70_out_sp}]{\includegraphics[width=0.249\linewidth]{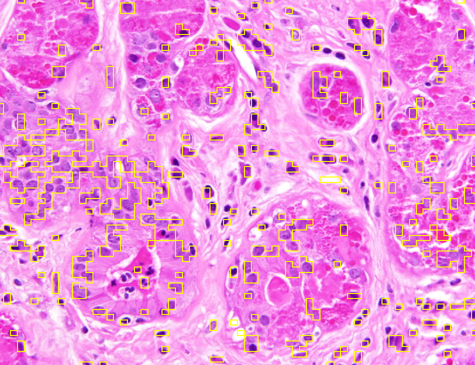}}

\subfloat[SMST, I1, $40\times40$\label{fig_sub:he1_box_40_out_mst}]{\includegraphics[width=0.249\linewidth]{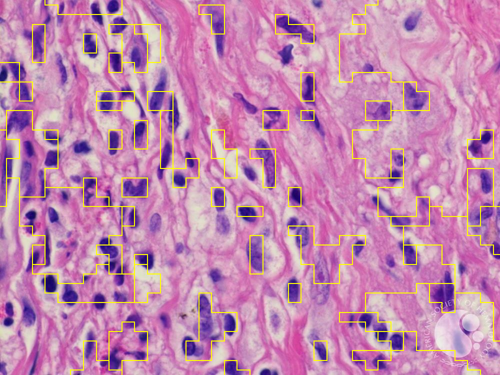}}
\hfill
\subfloat[SMST, I1, $30\times30$]{\includegraphics[width=0.249\linewidth\label{fig_sub:he1_box_30_out_mst}]{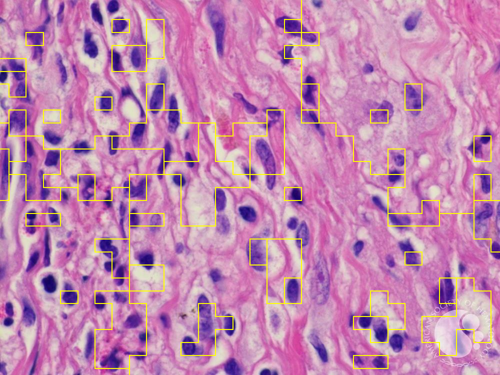}}
\hfill
\subfloat[SMST, I2, $80\times80$]{\includegraphics[width=0.249\linewidth\label{fig_sub:he2_box_80_out_mst}]{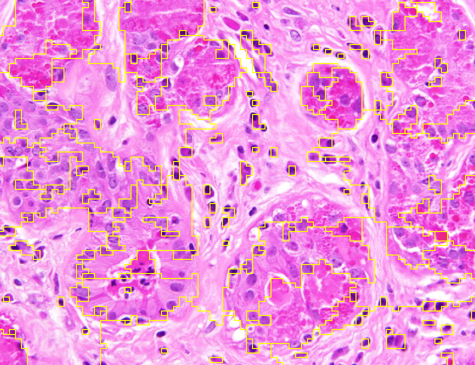}}
\hfill
\subfloat[SMST, I2, $70\times70$]{\includegraphics[width=0.249\linewidth\label{fig_sub:he2_box_70_out_mst}]{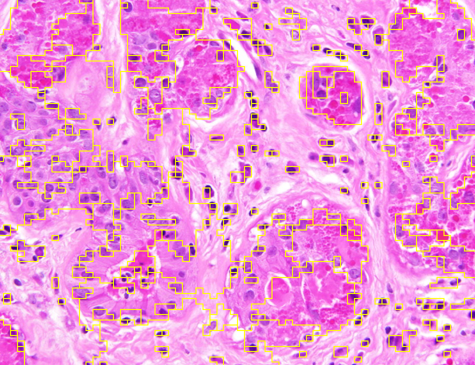}}
\caption{Comparison of OT-based segmentation and variance-based SMST. The first row (a–d) shows two images I1 and I2 from Dataset~1 partitioned into rectangular superpixels of sizes 40×40 (a), 30×30 (b), 80×80 (c), and 70×70 (d), respectively. The second row (e–h) shows SP results (times in seconds: 1.97, 0.28, 22.48, 16.12). The third row (i–l) shows SMST results (times in seconds: 0.35, 0.35, 0.37, 0.43).
}\label{fig:H&ESEG12_rectangle}
\end{figure}

\subsubsection{Selective Models}
We now evaluate SP's selective segmentation (Section~\ref{SPmarker}) by specifying markers and their class membership. Several level set models, such as the Roberts–Chen model \cite{roberts2019convex} and the AR model, provide graphical interfaces that allow users to place seeds or draw regions marking the objects of interest. Some deep learning models, such as SAM, also support semantic selective segmentation. 

Figure~\ref{fig:selective_original_image} shows an image from Dataset 3 with two different sets of markers. In the first case, Fig.~\ref{fig:selective_original_image}(a,b), we manually placed five foreground seeds and one background seed; in the second case, Fig.~\ref{fig:selective_original_image}(c,d), we slightly perturb the position of makers to test the stability of different models.  Foreground and background seeds are labeled with the prefixes `f' and `b,' respectively, with the following number indicating their order (e.g., f2 denotes the second foreground seed). Note that there is only one background seed b1 in both cases. We then provided these to AR, SAM, and SP as markers.  

Our model SP follows the selective variant from Section~\ref{SPmarker}, with the target region number set equal to the number of markers. For SAM, we used the selective segmentation module in MATLAB 2024b. The AR model, while not requiring explicit background seeds, still needed background information; this was provided by assuming the domain outside the specified foreground was background. Since AR expects polygonal regions rather than points, we dilated the foreground seeds with a disk kernel and used the resulting regions as markers.

Figure~\ref{fig:selective2} shows the results for the first case (Fig.~\ref{fig_sub:selective2_seeds1}) with incrementally added foreground markers. SAM performed well with 1-2 markers, but with 4-5 it incorrectly classified much of the domain as foreground suggesting that the scattered markers led SAM to infer that the entire image was of interest. AR produced a clear background but failed to segment the cells containing f1, f3, and f5, while adding an extra cell around f4. By contrast, SP consistently captured all marked cells while maintaining a clean background.

Figure~\ref{fig:selective2_another} shows the results for the second case, the case with slightly perturbed marker positions  (Fig.~\ref{fig_sub:selective_seeds3}). Similar to the previous scenario, SAM performed successfully only on the cell with the first foreground marker, while for the other marker, SAM mixed the target cell with other structures. AR showed  similar problems as before, recognising only the first two cells (f1 and f2) and disregarding the others. This occurred because, during iterations, AR incorrectly shrank the level set corresponding to the other markers. The results for SP are identically to those in Fig.~\ref{fig:selective2} and are therefore not dublicated in this figure.

In summary, AR and SAM seem sensitive to both the position and number of markers, whereas SP yields consistent results despite marker perturbations, benefiting from noise-robust superpixels and an OT metric that is comparatively insensitive to such variability.

\begin{figure}[htb]
\centering

\subfloat[\label{fig_sub:selective2_seeds1}]{\includegraphics[width=0.249\linewidth]{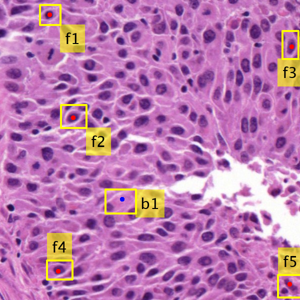}}
\hfill
\subfloat[\label{fig_sub:selective2_seeds2}]{\includegraphics[width=0.249\linewidth]{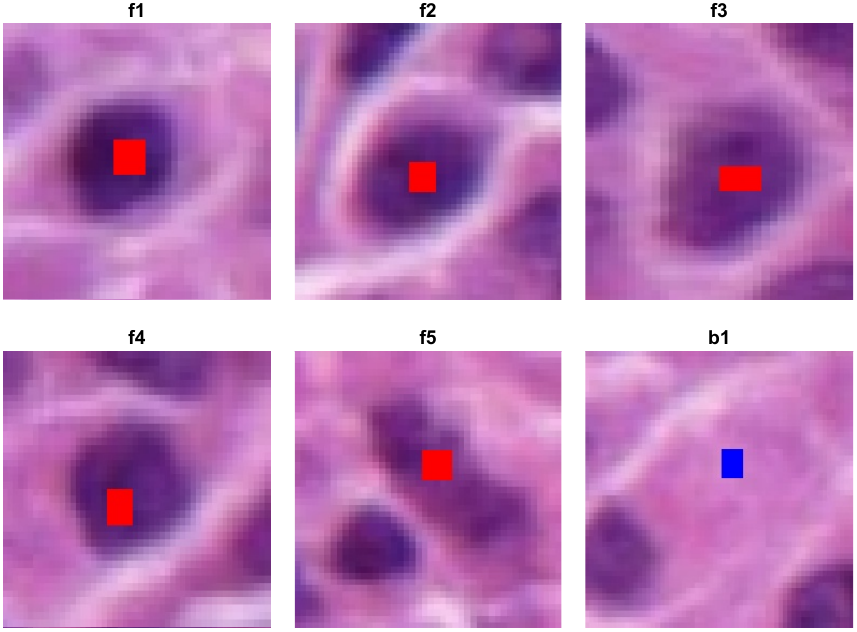}}
\hfill
\subfloat[\label{fig_sub:selective_seeds3}]{\includegraphics[width=0.249\linewidth]{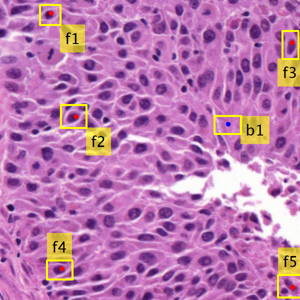}}
\hfill
\subfloat[\label{fig_sub:selective_seeds4}]{\includegraphics[width=0.249\linewidth]{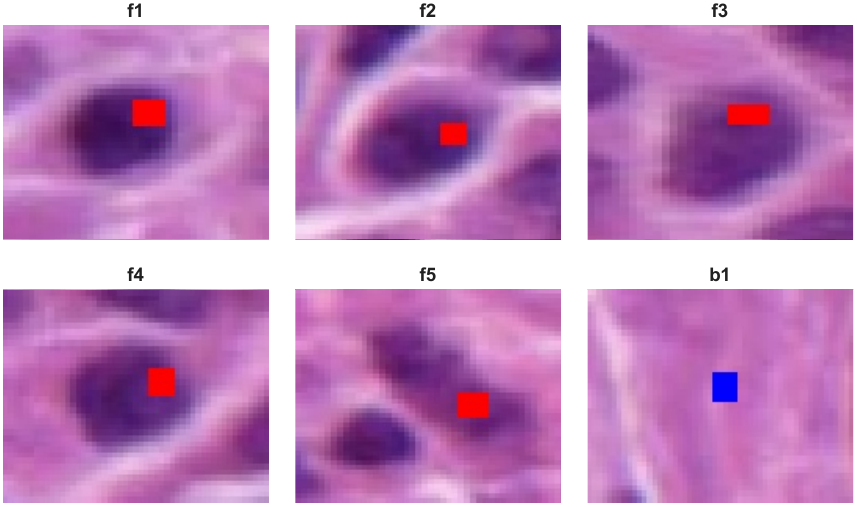}}

\caption{Selective segmentation seeds: (a,c) images with foreground (red) and background (blue) seeds, labeled ‘f’ and ‘b’ respectively (numbers indicate order); (b,d) corresponding zoomed-in views.
}\label{fig:selective_original_image}
\end{figure}

\begin{figure}[htb]
\centering

\subfloat[SAM, f1]{\includegraphics[width=0.249\linewidth]{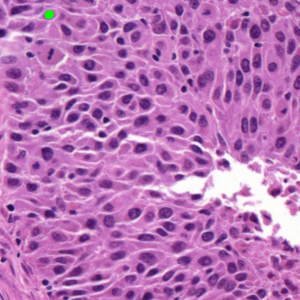}}
\hfill
\subfloat[SAM, f1-f2]{\includegraphics[width=0.249\linewidth]{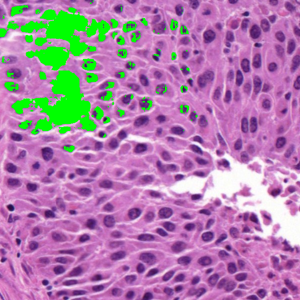}}
\hfill
\subfloat[SAM, f1-f3]{\includegraphics[width=0.249\linewidth]{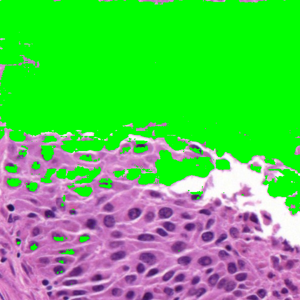}}
\hfill
\subfloat[SAM, f1-f4]{\includegraphics[width=0.249\linewidth]{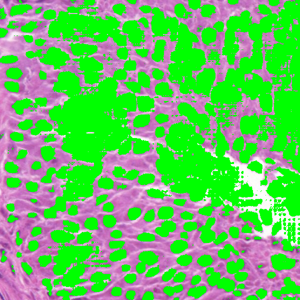}}

\subfloat[SAM, f1-f5]{\includegraphics[width=0.249\linewidth]{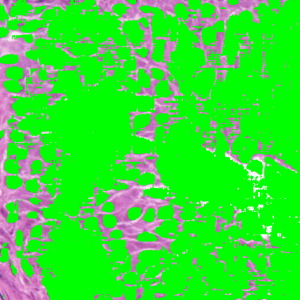}}
\hfill
\subfloat[AR, f1]{\includegraphics[width=0.249\linewidth]{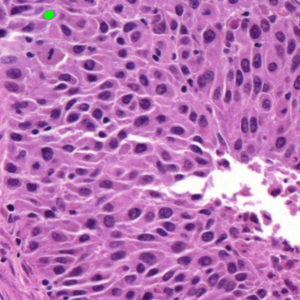}}
\hfill
\subfloat[AR, f1-f2 ]{\includegraphics[width=0.249\linewidth]{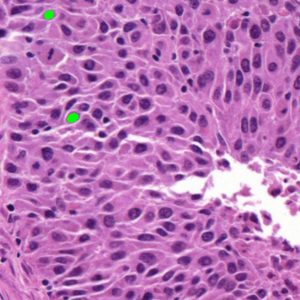}}
\hfill
\subfloat[AR, f1-f3]{\includegraphics[width=0.249\linewidth]{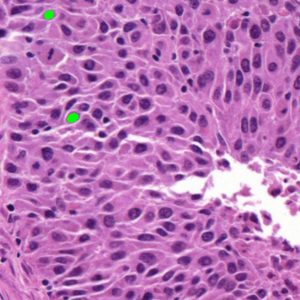}}

\subfloat[AR, f1-f4]{\includegraphics[width=0.249\linewidth]{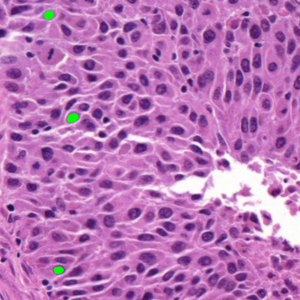}}
\hfill
\subfloat[AR, f1-f5]{\includegraphics[width=0.249\linewidth]{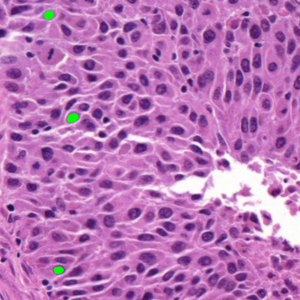}}
\hfill
\subfloat[SP, f1\label{fig_sub:selective2_our1}]{\includegraphics[width=0.249\linewidth]{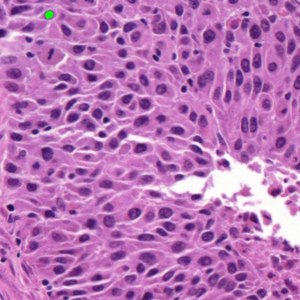}}
\hfill
\subfloat[SP, f1-f2\label{fig_sub:selective2_our2}]{\includegraphics[width=0.249\linewidth]{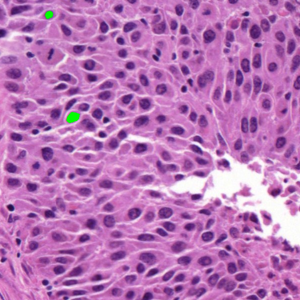}}

\subfloat[SP, f1-f3\label{fig_sub:selective2_our3}]{\includegraphics[width=0.249\linewidth]{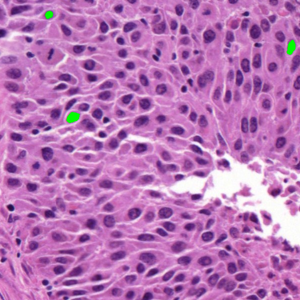}}
\hfill
\subfloat[SP, f1-f4\label{fig_sub:selective2_our4}]{\includegraphics[width=0.249\linewidth]{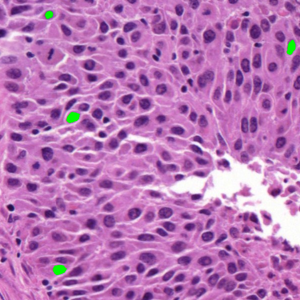}}
\hfill
\subfloat[SP, f1-f5\label{fig_sub:selective2_our5}]{\includegraphics[width=0.249\linewidth]{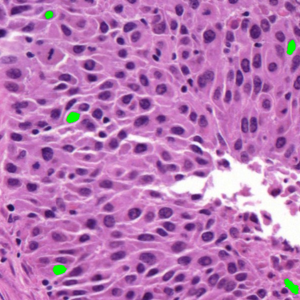}}
\hfill
\hspace{0.249\linewidth}
\caption{Selective segmentation results for the image in Fig.~\ref{fig_sub:selective2_seeds1} using background marker b1 and incrementally added foreground markers: f1, f1-f2, f1-f3, f1-f4, and f1-f5. (a–e) SAM, (f–j) AR, (k–o) SP.}\label{fig:selective2}
\end{figure}

\begin{figure}[htb]
\centering

\subfloat[SAM, f1]{\includegraphics[width=0.249\linewidth]{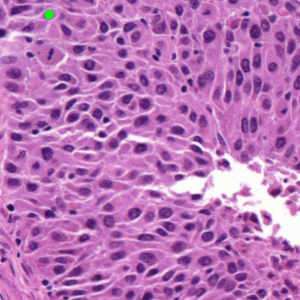}}
\hfill
\subfloat[SAM, f1-f2]{\includegraphics[width=0.249\linewidth]{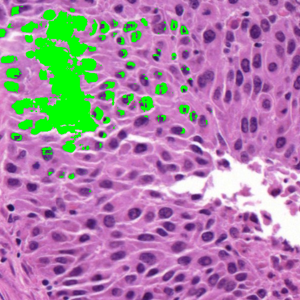}}
\hfill
\subfloat[SAM, f1-f3]{\includegraphics[width=0.249\linewidth]{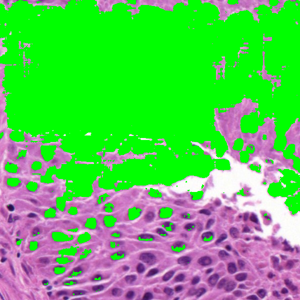}}
\hfill
\subfloat[SAM, f1-f4]{\includegraphics[width=0.249\linewidth]{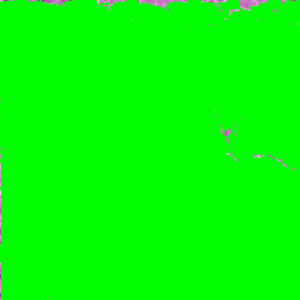}}

\subfloat[SAM, f1-f5]{\includegraphics[width=0.249\linewidth]{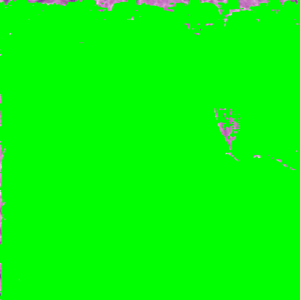}}
\hfill
\subfloat[AR, f1]{\includegraphics[width=0.249\linewidth]{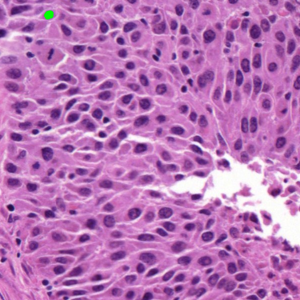}}
\hfill
\subfloat[AR, f1-f2 ]{\includegraphics[width=0.249\linewidth]{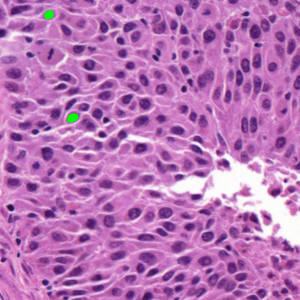}}
\hfill
\subfloat[AR, f1-f3]{\includegraphics[width=0.249\linewidth]{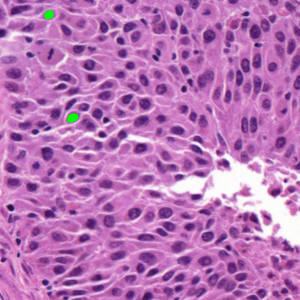}}

\subfloat[AR, f1-f4]{\includegraphics[width=0.249\linewidth]{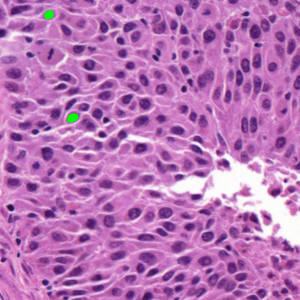}}
\hfill
\subfloat[AR, f1-f5]{\includegraphics[width=0.249\linewidth]{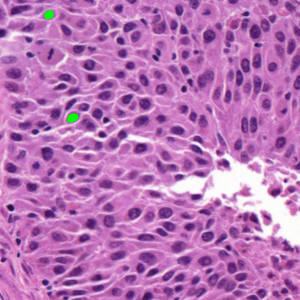}}

\caption{Selective segmentation results for the image in Fig.~\ref{fig_sub:selective_seeds3} using background marker b1 and incrementally added foreground markers: f1, f1-f2, f1-f3, f1-f4, and f1-f5. (a–e) SAM, (f–j) AR.}\label{fig:selective2_another}
\end{figure}

\subsubsection{Unsupervised Models} We validate unsupervised SP on the three datasets by comparison with SAM, SMST, AR, and ZZ. All methods are fully unsupervised; however, AR and ZZ require an initial level set for contour evolution.

We first present  segmentation results for Dataset~1. Figures~\ref{fig_sub:rending_output_sam}-\ref{fig_sub:rending_output_SP} show the outputs for the image in Fig.~\ref{fig_sub:rending_image}, for SAM, AR, ZZ, SMST, and SP. The AR and ZZ models used manually specified initial level sets (Fig.~\ref{fig_sub:rending_input_AR}), while SP and SMST used the same set of 4000 superpixels generated by Power-SLIC (Fig.~\ref{fig_sub:rending_input_sp}) using the first two LAB channels. SAM relied solely on RGB images. The segmentation goal was to separate the red mesh from the grayscale background. SAM omitted some meshes and introduced background noise; AR and ZZ misclassified many dark meshes as background due to their reliance on local region terms; the SMST method erroneously classified portions of the foreground as background and produced a fragmented representation of the background region. In contrast, SP  achieved accurate segmentation of all meshes while successfully suppressing the background. Computation times are as follows: SMST was fastest (0.42s + 0.38s for Power-SLIC and merging), followed by SP (0.42s + 4.45s for Power-SLIC and merging). SAM required~14.62s (including model loading), while ZZ was the slowest at 11102.89s, requiring a substantial number of pixel level convolutions and  iterations in the employed additive operator splitting (AOS) scheme; see~\cite{roberts2019convex}.

\begin{figure}[htb]
\centering
\subfloat[Initial\label{fig_sub:rending_input_AR}]{\includegraphics[width=0.249\linewidth]{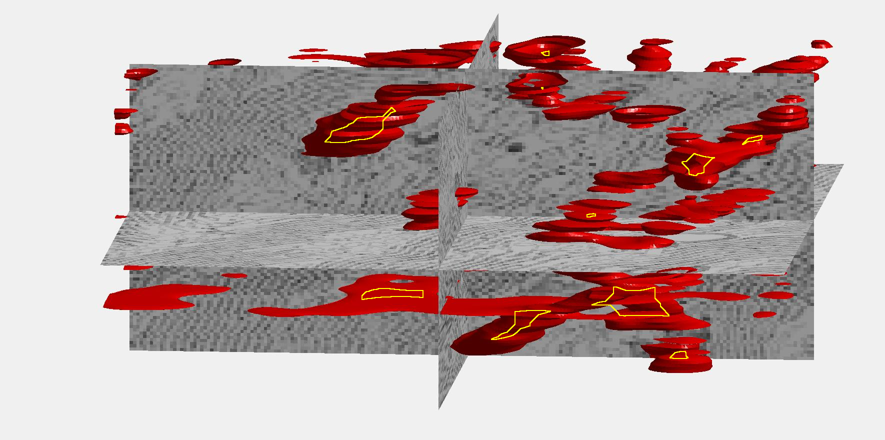}}
\hfill
\subfloat[Initial\label{fig_sub:rending_input_sp}]{\includegraphics[width=0.249\linewidth]{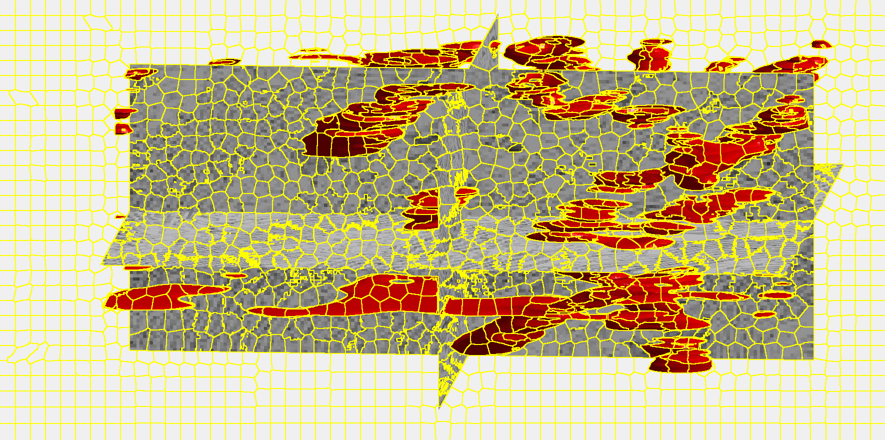}}
\hfill
\subfloat[SAM\label{fig_sub:rending_output_sam}]{\includegraphics[width=0.249\linewidth]{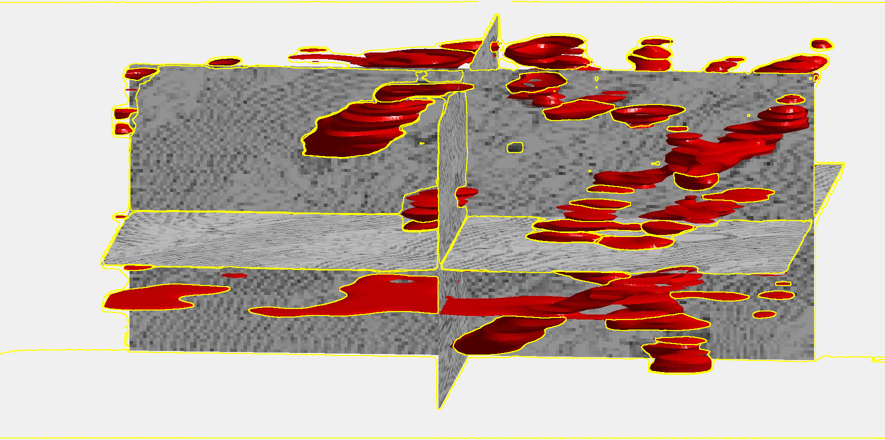}}
\\
\subfloat[AR\label{fig_sub:rending_output_AR}]{\includegraphics[width=0.249\linewidth]{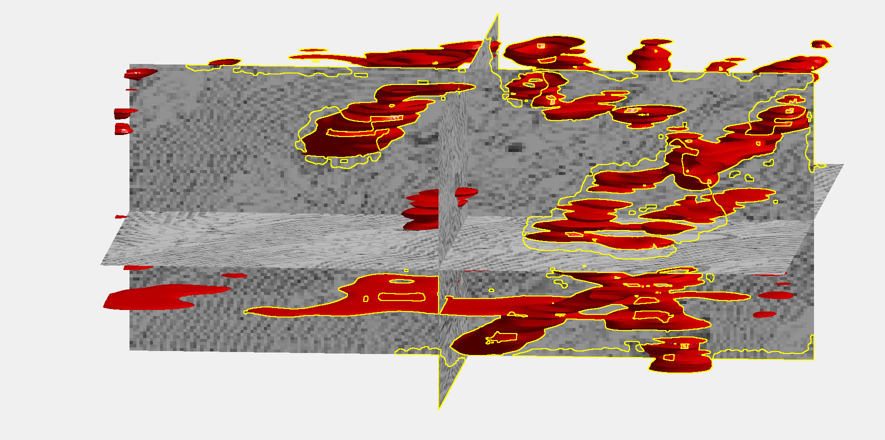}}
\hfill
\subfloat[ZZ\label{fig_sub:rending_output_ZZ}]{\includegraphics[width=0.249\linewidth]{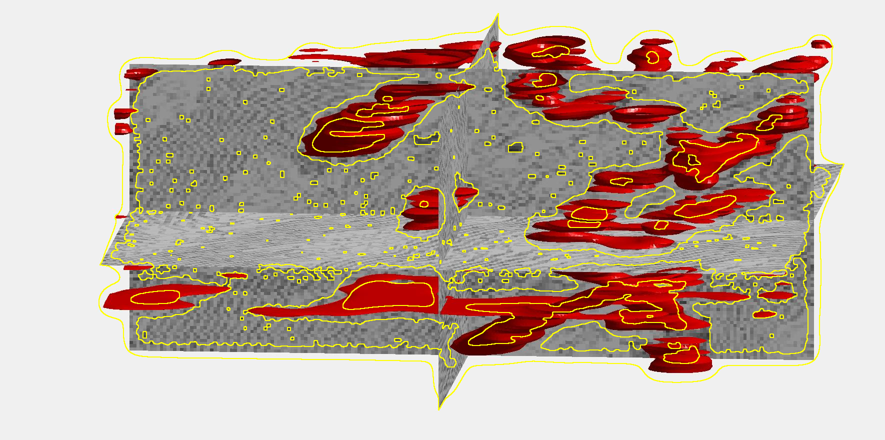}}
\hfill
\subfloat[SMST\label{fig_sub:rending_output_MST}]{\includegraphics[width=0.249\linewidth]{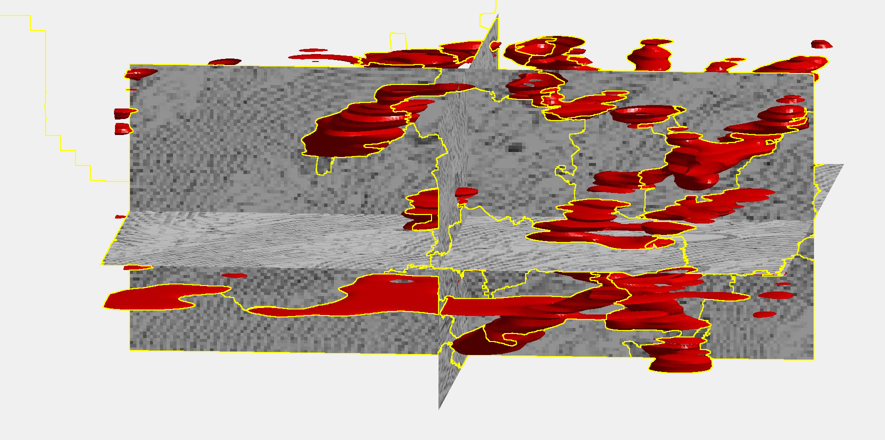}}
\hfill
\subfloat[SP\label{fig_sub:rending_output_SP}]{\includegraphics[width=0.249\linewidth]{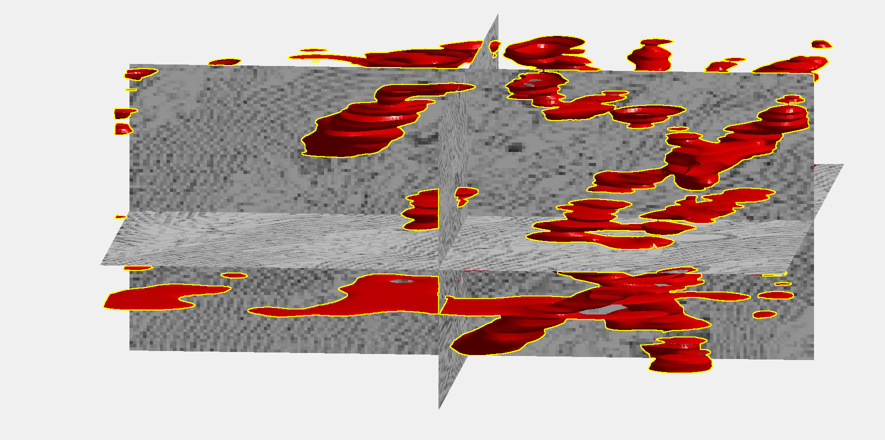}}
\caption{Comparison of unsupervised segmentation models on Dataset~1. (a) and (b) show the intial level sets and superpixel, respectively: AR and ZZ in (a) (yellow curves, computation times~$\infty$), SMST and SP in (b) (yellow curves, 0.42s).
(c–g) show segmentation results (yellow curves) from SAM, AR, ZZ, SMST, and SP, respectively.}\label{fig:3D_render_seg}
\end{figure}
Figure~\ref{fig:H&ESEG1} shows the  segmentation results for the first hematoxylin and eosin (H\&E)–stained image from Dataset~1 (see Fig.~\ref{fig_sub:he1_image}).  Initial level sets and superpixels are shown in Figs.~\ref{fig_sub:he1_in1_zz}-\ref{fig_sub:he1_in_SP}: AR and ZZ used two distinct level sets for nuclei (S1) and cytoplasm (S2)  (Figs.~\ref{fig_sub:he1_in1_zz}-\ref{fig_sub:he1_in2_zz}), while SP and SMST (Fig.~\ref{fig_sub:he1_in_SP}) used the same set of 4000 Power-SLIC superpixels in the same color space (red channel for S1, remaining channels for S2). SAM does not allow separate-channel segmentation; its combined S1/S2 result is shown in Fig.~\ref{fig_sub:he1_out_SAM}. Figures~\ref{fig_sub:he1_out1_AR}-\ref{fig_sub:he1_out1_SP} display S1 results for AR, ZZ, SMST, and SP, and Figs.~\ref{fig_sub:he1_out2_AR}-\ref{fig_sub:he1_out2_SP} show the corresponding S2 results.

AR seemed highly sensitive to initialization, segmenting mainly regions near seed points and requiring seeds in nearly all connected foreground components for satisfactory performance. ZZ segmented a wider range of objects from fewer seeds but incorrectly included nuclei in cytoplasm segmentation, failing to capture white regions and introducing spurious tissue. SAM automatically detected many nuclei and some cytoplasm but merged all structures into a single segmentation, limiting its practical utility here. SMST was the fastest but failed to recover large homogeneous regions, such as the background, and frequently fragmented them due to its variance-based metric. In contrast, SP yielded the most accurate results on both S1 and S2, with clear background separation and precise boundary delineation.

\begin{figure}[htb]
\centering

\subfloat[S1 LS: AR, ZZ\label{fig_sub:he1_in1_zz}]{\includegraphics[width=0.249\linewidth]{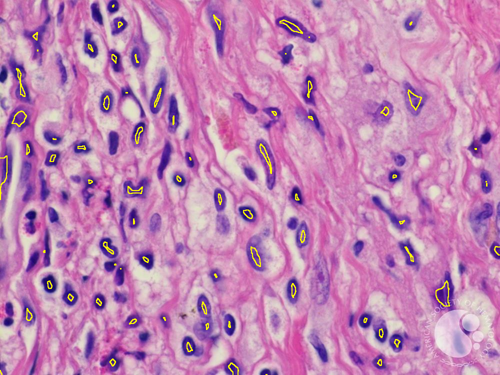}}
\hfill
\subfloat[S2 LS: AR, ZZ\label{fig_sub:he1_in2_zz}]{\includegraphics[width=0.249\linewidth]{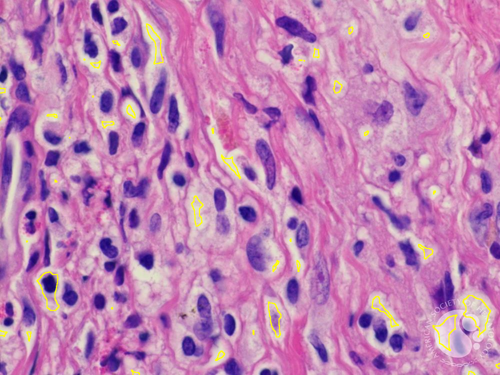}}
\hfill
\subfloat[S1/S2 SPx: SMST, SP\label{fig_sub:he1_in_SP}]{\includegraphics[width=0.249\linewidth]{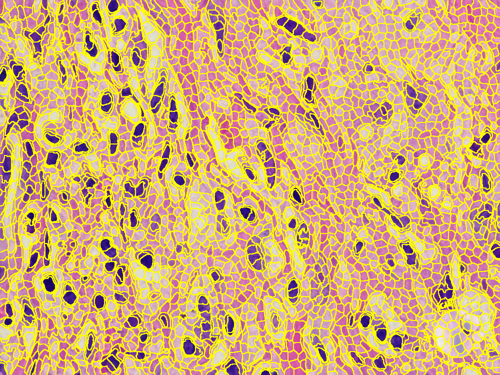}}
\hfill
\subfloat[S1/S2,  SAM\label{fig_sub:he1_out_SAM}]{\includegraphics[width=0.249\linewidth]{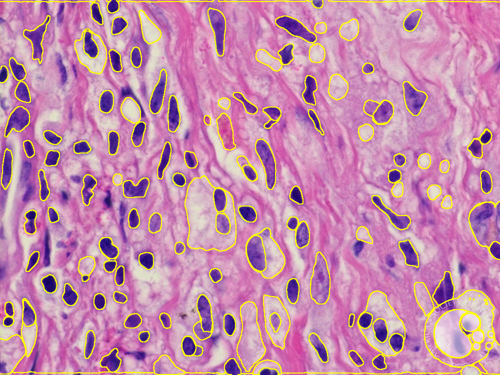}}

\subfloat[S1, AR\label{fig_sub:he1_out1_AR}]{\includegraphics[width=0.249\linewidth]{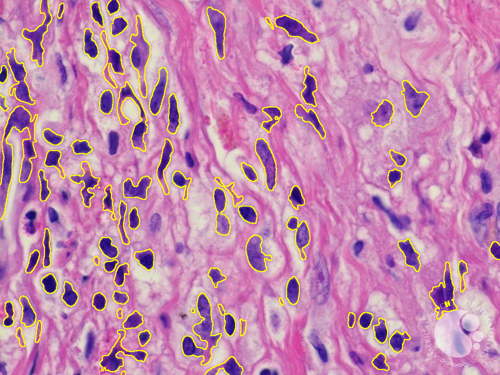}}
\hfill
\subfloat[S1, ZZ\label{fig_sub:he1_out1_ZZ}]{\includegraphics[width=0.249\linewidth]{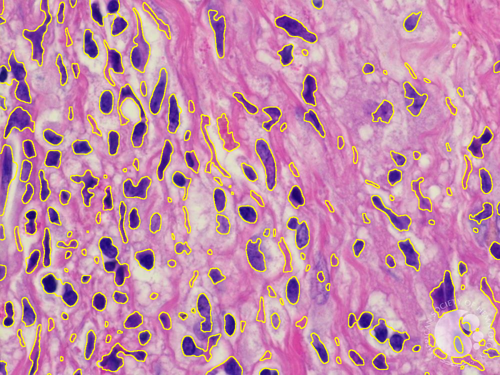}}
\hfill
\subfloat[S1,  SMST\label{fig_sub:he1_out1_MST}]{\includegraphics[width=0.249\linewidth]{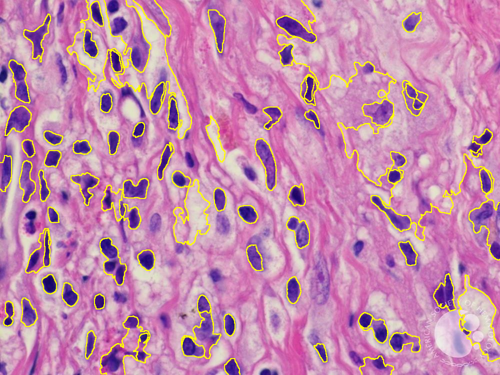}}
\hfill
\subfloat[S1, SP\label{fig_sub:he1_out1_SP}]{\includegraphics[width=0.249\linewidth]{parameter_setting/seg115.png}}

\subfloat[S2, AR\label{fig_sub:he1_out2_AR}]{\includegraphics[width=0.249\linewidth]{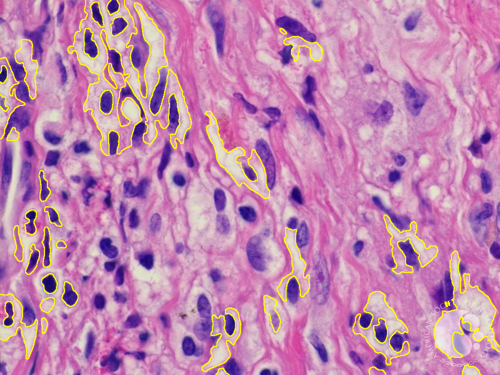}}
\hfill
\subfloat[S2, ZZ\label{fig_sub:he1_out2_ZZ}]{\includegraphics[width=0.249\linewidth]{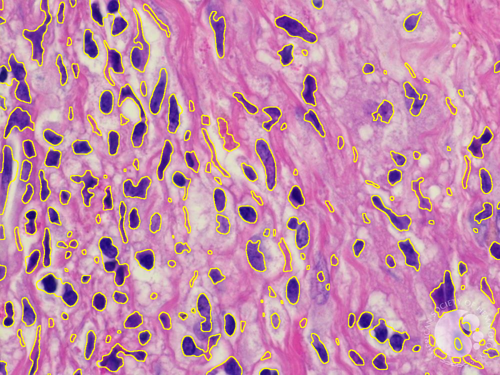}}
\hfill
\subfloat[S2, SMST\label{fig_sub:he1_out2_MST}]{\includegraphics[width=0.249\linewidth]{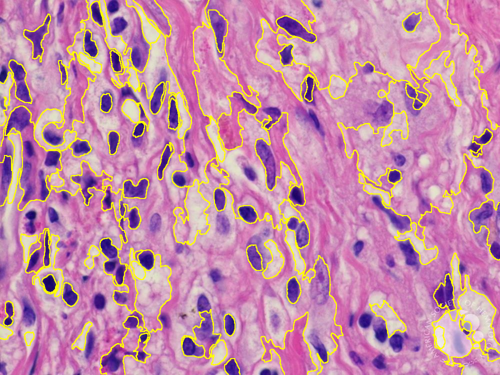}}
\hfill
\subfloat[S2, SP\label{fig_sub:he1_out2_SP}]{\includegraphics[width=0.249\linewidth]{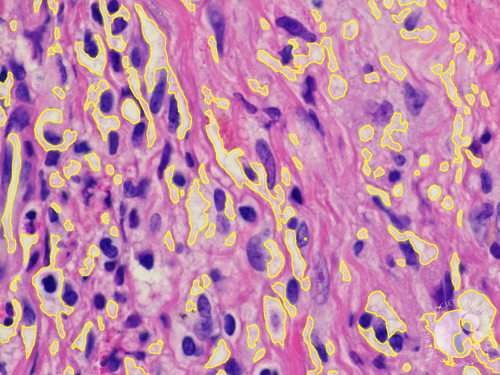}}

\caption{Unsupervised segmentation of the image in Fig.~\ref{fig_sub:he1_image} for two structures: purple cells (S1) and white cytoplasm (S2). Initializations via level sets (LS) for AR and ZZ are shown in (a, b); 4000 superpixels for SMST and SP in (c) (red channel for S1, remaining  channels for S2). SAM does not allow separate-channel segmentation; its overall output is in (d). Results for S1 are in (e–h) and for S2 in (i–l). Computation times (S1/S2, in seconds): AR 829.51/570.32, ZZ 4304.50/4470.56, SMST 0.38/0.38, SP 5.98/4.26; superpixels in (c) computed in 0.17s, SAM in (d) in 7.42s.
}\label{fig:H&ESEG1}
\end{figure}

Figure~\ref{fig:H&ESEG2} shows the results for the second hematoxylin–eosin stained image from Dataset~1 (see Fig.~\ref{fig_sub:he2_image}).  The AR model detected only structures near the initialization, missing many objects. The ZZ model recovered most cells but also introduced numerous unnecessary pink regions, and its two outputs were nearly identical despite different initial level sets (Figs.~\ref{fig_sub:he2_in1_AR}-\ref{fig_sub:he2_in2_AR}), similar to the over-segmentation behavior of SAM (Fig.~\ref{fig_sub:he2_out_SAM}). SAM, which does not require initialization, failed to separate different structures, while SMST produced too few foreground objects and added spurious regions, without a clear background. In contrast, SP, initialized with 6000 Power-SLIC superpixels (Fig.~\ref{fig_sub:he2_in_SP}), achieved accurate separation of nuclei and cytoplasm.

\begin{figure}[htb]
\centering

\subfloat[S1 LS: AR, ZZ\label{fig_sub:he2_in1_AR}]{\includegraphics[width=0.249\linewidth]{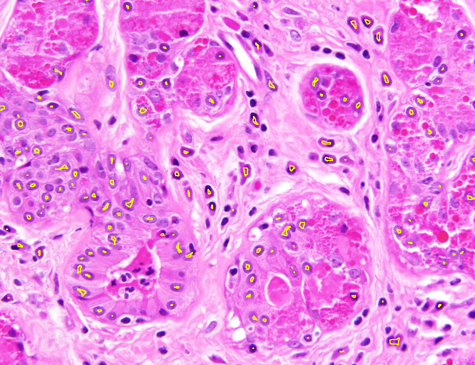}}
\hfill
\subfloat[S2 LS: AR, ZZ\label{fig_sub:he2_in2_AR}]{\includegraphics[width=0.249\linewidth]{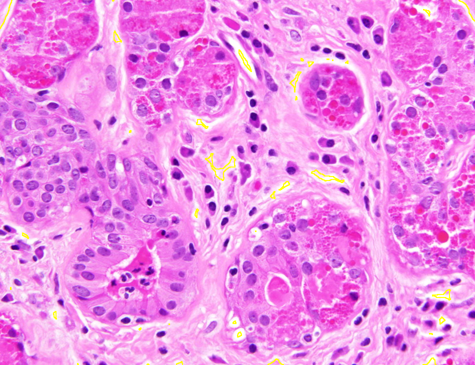}}
\hfill
\subfloat[S1/S2 SPx: SMST, SP \label{fig_sub:he2_in_SP}]{\includegraphics[width=0.249\linewidth]{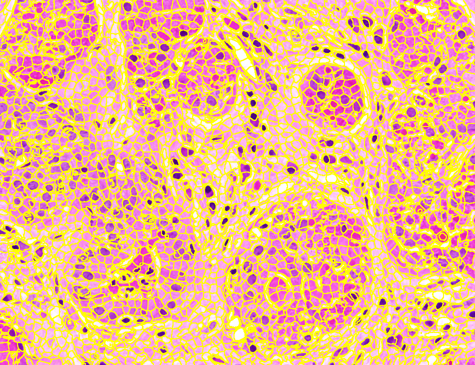}}
\hfill
\subfloat[S1/S2, SAM\label{fig_sub:he2_out_SAM}]{\includegraphics[width=0.249\linewidth]{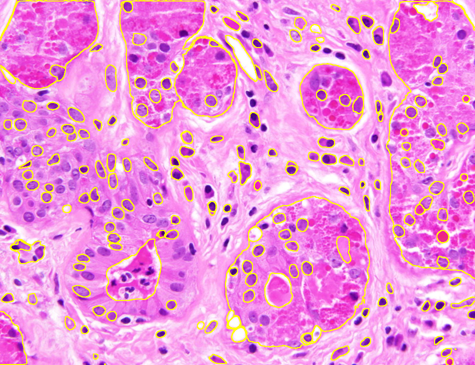}}

\subfloat[S1, AR\label{fig_sub:he2_out1_AR}]{\includegraphics[width=0.249\linewidth]{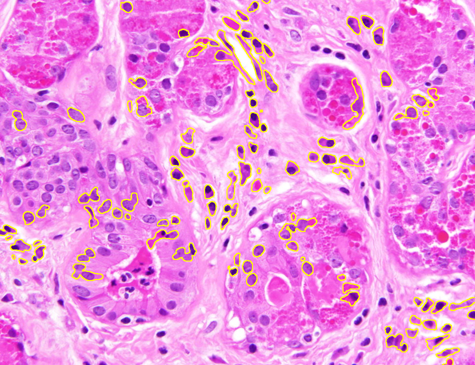}}
\hfill
\subfloat[S1, ZZ\label{fig_sub:he2_out1_ZZ}]{\includegraphics[width=0.249\linewidth]{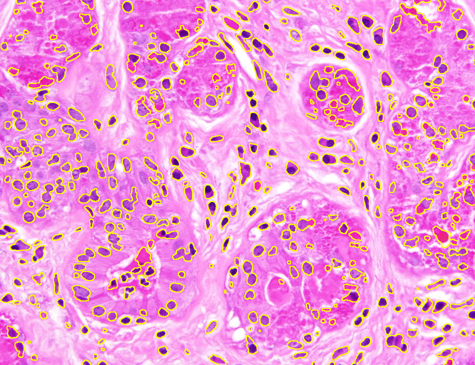}}
\hfill
\subfloat[S1, SMST\label{fig_sub:he2_out1_MST}]{\includegraphics[width=0.249\linewidth]{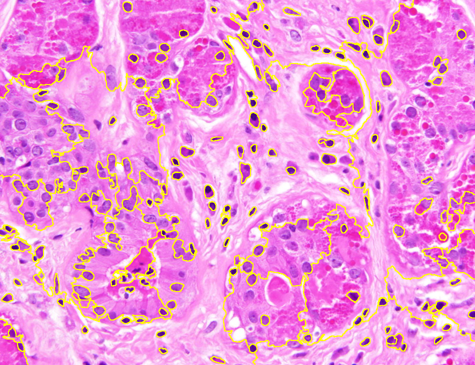}}
\hfill
\subfloat[S1, SP\label{fig_sub:he2_out1_SP}]{\includegraphics[width=0.249\linewidth]{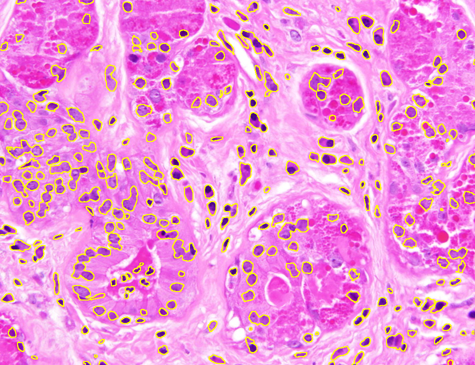}}

\subfloat[S2, AR\label{fig_sub:he2_out2_AR}]{\includegraphics[width=0.249\linewidth]{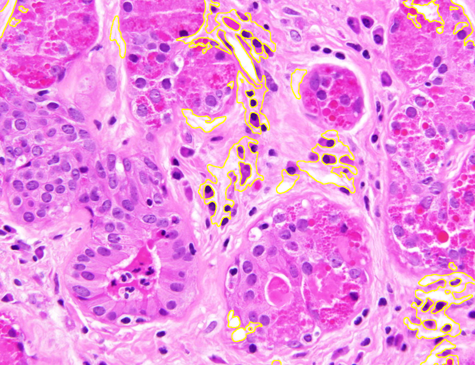}}\hfill
\subfloat[S2, ZZ\label{fig_sub:he2_out2_ZZ}]{\includegraphics[width=0.249\linewidth]{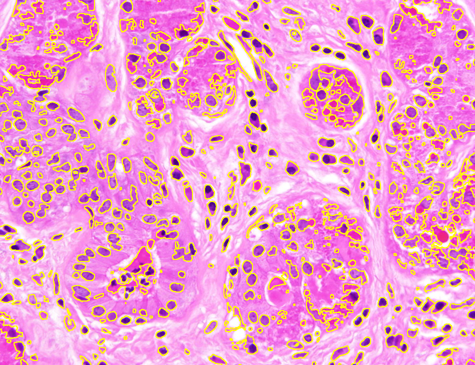}}
\hfill
\subfloat[S2, SMST\label{fig_sub:he2_out2_MST}]{\includegraphics[width=0.249\linewidth]{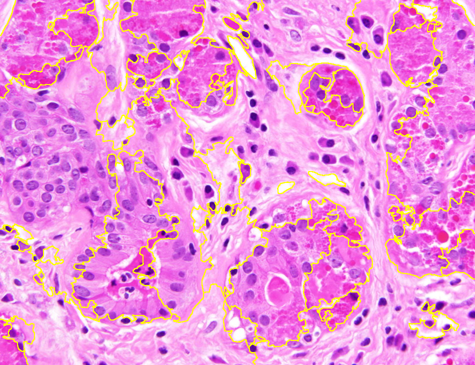}}
\hfill
\subfloat[S2, SP\label{fig_sub:he2_out2_SP}]{\includegraphics[width=0.249\linewidth]{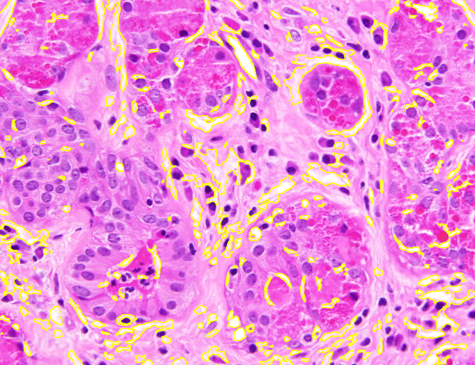}}

\caption{Unsupervised segmentation of the image in Fig.~\ref{fig_sub:he2_image} for two structures: purple cells (S1) and white cytoplasm (S2). Initializations via level sets (LS) for AR and ZZ are shown in (a, b); 6000 superpixels for SMST and SP in (c) (red channel for S1, remaining  channels for S2). SAM does not allow separate-channel segmentation; its overall output is in (d). Results for S1 are in (e–h) and for S2 in (i–l). Computation times (S1/S2, in seconds): AR 461.85/474.46, ZZ 2423.73/2434.59, SMST 0.42/0.38, SP 42.31/12.51; superpixels in (c) computed in 0.14s, SAM in (d) in 7.47s. 
}\label{fig:H&ESEG2}
\end{figure}

\subsection{Quantitative Results}
We now assess SP using the 36 images from Dataset~2 and 50 images from Dataset~3. In our evaluation, we employ the Dice similarity coefficient (DSC) (as usual defined as the harmonic mean of precision and recall),
$\mathrm{DSC} = 2/(1/\mathrm{precision} + 1/\mathrm{recall}),$ $\mathrm{precision} = \mathrm{tp}/(\mathrm{tp} + \mathrm{fp}),$
$\mathrm{recall} = \mathrm{tp}/(\mathrm{tp} + \mathrm{fn}),$
where $\mathrm{tp}$ denotes the number of ground-truth points within the segmentation foreground, 
$\mathrm{fp}$ the number of predicted regions without ground truth, and 
$\mathrm{fn}$ the number of ground-truth points outside the segmentation foreground \cite{wienert2012detection}.  In Dataset~3, since pixel-level annotations were given~\cite{naylor2018segmentation}, the above metrics were calculated on a per-pixel basis.

To facilitate automatic comparison, we specify the parameters used for this task.  For AR and ZZ in Dataset~2, initial level sets were created by applying morphological dilation to the first~40 ground-truth points using a disk-shaped kernel with a radius of~5. If fewer than~40 points were available, all available points were dilated instead.  For  AR and ZZ in Dataset~3, pixel-level ground truth was provided, and the ground truth was eroded using a disk-shaped kernel with a radius of~5 as manually drawing these initializations would be time-consuming. SAM, SP, and SMST treat each connected component as a single object; they do not explicitly distinguish foreground from background. Consequently, the largest connected component was designated as background, with all remaining regions assigned to the foreground. Furthermore, because SMST often fragments the background into multiple regions, we treated SMST output regions containing holes as background. To implement this, regions with an Euler number less than 1 were removed using MATLAB’s \texttt{regionprops} function.

Table~\ref{tab:comparation_dataset_2}, on Dataset 2, shows that SP obtained the highest DSC (88.78), which is higher by 5.90, 7.09, 8.31, and 22.46 compared to SAM, ZZ, SMST, and AR, respectively. The segmentation time of SP was 14.55s, approximately twice that of SAM (the second fastest method without considering pre-processing times), and more than one order of magnitude lower than the level-set methods AR and ZZ. While AR and ZZ need user-specific markers as input (the pre-processing time indicated as $\infty$), SAM needs to load the pre-trained model (6.09s) while SP and SMST need to generate the superpixels in the pre-processing step (0.08s).  Figure~\ref{fig:H&ESEG3} shows the resulting segmentations for two example images from Figs.~\ref{fig_sub:he3_image} and \ref{fig_sub:he4_image} from Dataset~2. The segmentations reflect the trends observed in Table~\ref{tab:comparation_dataset_2}, with SP achieving the highest DSC and segmentation times exhibiting a similar relative order. 

\begin{table}
    \centering
    \begin{tabular}{llccccc}
    \toprule
                      &       &AR   &  SMST             &  ZZ  & SAM   &  SP \\ \midrule
{\small Dataset 2} &&&&&\\
&DSC                          &$66.32$                 &  $80.47$         &  $81.69$                  & $82.88$                         &   $\mathbf{88.78}$    \\
&Preproc. Time (s)           &$\infty$                &  $\mathbf{0.08}$ &  $\infty$                 & $6.09$                         &   $\mathbf{0.08}$ \\
&Segm.  Time  (s)          &$395.81$                &  $\mathbf{0.38}$ &  $2212.70$                & $7.75$                          &   $14.55$ \\ \midrule
{\small Dataset 3}&&&&&\\
&DSC                          &$61.35$                 &  $60.63$         &  $65.33$                  & $73.00$                         &   $\mathbf{76.62}$    \\
&Preproc. Time (s)         &$\infty$                &  $\mathbf{0.08}$ &  $\infty$                 & $6.65$                         &   $\mathbf{0.08}$ \\
&Segm.  Time (s)         &$302.76$                &  $\mathbf{0.39}$ &  $1758.84$                & $7.54$                          &   $16.47$ \\
\bottomrule
    \end{tabular}
    \caption{Quantitative comparison of segmentation accuracy (Dice coefficient, DSC) and running time on Dataset~2 and~3.}
    \label{tab:comparation_dataset_2}
\end{table}


On Dataset 3, the resulting DSC values, shown in Table~\ref{tab:comparation_dataset_2}, are  lower than those for Dataset~2. This difference is mainly caused by the fact that Dataset~3 provides pixel-level annotations, whereas Dataset~2 only includes nuclei center annotations. The computation times for AR and ZZ are shorter  due to the smaller image size ($512 \times 512$ vs. $600 \times 600$ in Dataset~2). SP achieves a DSC improvement of 3.62 over SAM, despite being roughly twice as slow as SAM in the segmentation process. The results also show that SP achieves again the highest DSC performance (76.62), outperforming SMST (DSC 60.63), AR (DSC 61.35), and ZZ (DSC 65.33). 

\begin{figure}[htb]
\centering
\subfloat[S3, SAM\\\hspace*{3.5ex}($7.62$/$81.88$)\label{fig_sub:he3_out_SAM}]{\includegraphics[width=0.199\linewidth]{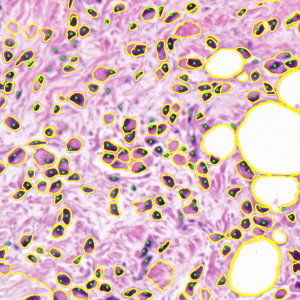}}
\hfill
\subfloat[S3, AR\\\hspace*{3.5ex}($234.69$/$64.32$)\label{fig_sub:he3_out_AR}]{\includegraphics[width=0.199\linewidth]{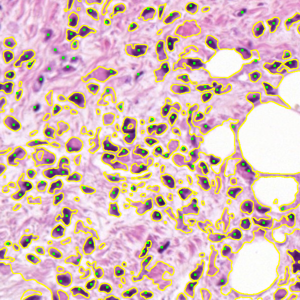}}
\hfill
\subfloat[S3, ZZ\\\hspace*{3.5ex}($1195.00$/$70.69$)\label{fig_sub:he3_out_ZZ}]{\includegraphics[width=0.199\linewidth]{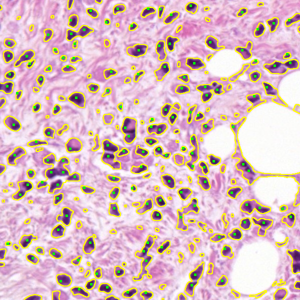}}
\hfill
\subfloat[S3, SMST\\\hspace*{3.5ex}($0.43$/$82.39$)\label{fig_sub:he3_out_MST}]{\includegraphics[width=0.199\linewidth]{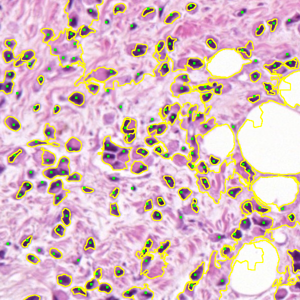}}
\hfill
\subfloat[S3, SP \\\hspace*{3.5ex}($4.32$/$86.77$)\label{fig_sub:he3_out_SP}]{\includegraphics[width=0.199\linewidth]{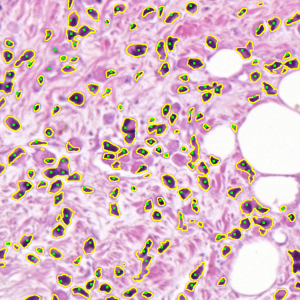}}

\subfloat[S4, SAM\\\hspace*{3.5ex}($7.48$/$83.64$)\label{fig_sub:he4_out_SAM}]{\includegraphics[width=0.199\linewidth]{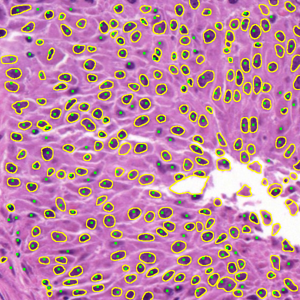}}
\hfill
\subfloat[S4, AR\\\hspace*{3.5ex}($428.36$/$76.72$)\label{fig_sub:he4_out_AR}]{\includegraphics[width=0.199\linewidth]{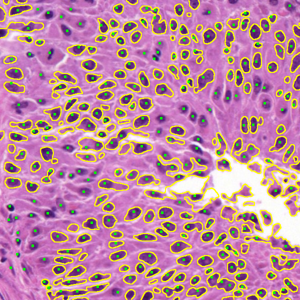}}
\hfill
\subfloat[S4, ZZ\\\hspace*{3.5ex}($2232.16$/$84.36$)\label{fig_sub:he4_out_ZZ}]{\includegraphics[width=0.199\linewidth]{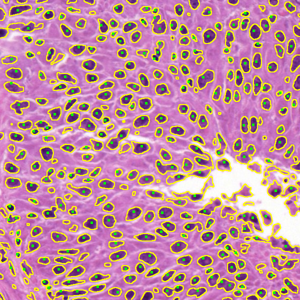}}
\hfill
\subfloat[S4, SMST\\\hspace*{3.5ex}($0.46$/$79.31$)\label{fig_sub:he4_out_MST}]{\includegraphics[width=0.199\linewidth]{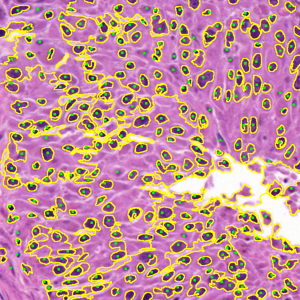}}
\hfill
\subfloat[S4, SP \\\hspace*{3.5ex}($15.75$/$89.33$)\label{fig_sub:he4_out_SP}]{\includegraphics[width=0.199\linewidth]{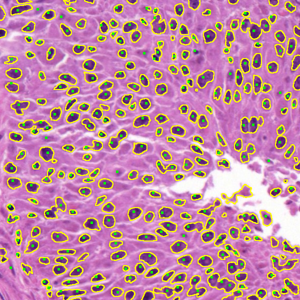}}

\caption{Segmentation of two  images from the quantitative comparison on Dataset~2 (S3 and S4, Figs. \ref{fig_sub:he3_image} and \ref{fig_sub:he4_image}); brackets show segmentation time (s) / DSC.}\label{fig:H&ESEG3}
\end{figure}

\section{Conclusion}\label{sec:conclusion} Global metrics, such as the variance-based energies in the Chan--Vese model~\cite{chan2001active} and the SMST framework~\cite{wang2017optimal}, effectively suppress local noise but remain sensitive to global intensity inhomogeneities. In contrast, local region-based models, including AR~\cite{ali2018image} and ZZ~\cite{zhang2015level}, improve robustness to inhomogeneity at the cost of increased computational complexity and a limited global receptive field.
We proposed a hierarchical segmentation approach that combines superpixel representations with OT-based region comparison. The two-stage structure enables a consistent use of distributional distances across scales, while reducing the complexity of region interactions and preserving meaningful boundaries. This formulation seems particularly well suited to images with strong intensity inhomogeneity.

Experiments show that the proposed method achieves higher segmentation accuracy than the considered classical and variational approaches, while being up to an order of magnitude faster than level-set methods. Its runtime remains competitive with fast deep-learning-based approaches such as SAM. Segmenting objects with weak boundaries or intensity distributions close to the background remains a challenge.

\backmatter

\bmhead{Acknowledgements}

This work was partially supported by the National Key R\&D Program of China 2021YFA1003003; the National Natural Science Foundation of China 61936002, T2225012;  the program of China Scholarships Council 202307300007; and the Engineering and Physical Sciences Research Council (EPSRC grant EP/X035883/1).

\section*{Declarations}

\begin{itemize}
\item \textbf{Conflict of interest} 
The authors declare no conflict of interest.
\item \textbf{Data availability} The datasets analyzed in this study are publicly available and cited in the manuscript. All images generated in the course of this work will be provided as supplementary material.
\item \textbf{Author Contributions} J.H. developed the theory, implemented the code,
performed the computations and simulations, and wrote the first draft of the manuscript.
A.A., K.C., and N.L. guided, supervised, and reviewed the findings of this work.
All authors conceived the presented idea, discussed the results, and
contributed to the final manuscript.
\end{itemize}

\bigskip

\begin{appendices}




\end{appendices}


\bibliography{sample}

\end{document}